\documentclass{article}

\usepackage{natbib}
\usepackage{graphicx} 
\usepackage{tikz}
\usepackage{booktabs}
\usepackage[left=1.5cm, right=2cm]{geometry}
\newgeometry{left=3cm,bottom=4cm}
\usepackage{algorithm,algpseudocode}

\usepackage{subcaption}
\usepackage{tikz}
\usetikzlibrary{math}
\usetikzlibrary{3d}
\usetikzlibrary{backgrounds}
\usetikzlibrary{calc}
\usetikzlibrary{arrows}
\usetikzlibrary{shapes.geometric,arrows.meta, angles}
\usetikzlibrary{angles,quotes} % for right angle marks% Outer rectangle: angle at A

\usepackage{tkz-euclide}
\usepackage{xcolor}
\usepackage{amsmath,amsthm, amssymb,amsfonts}
\usepackage{times}
\usepackage{latexsym}
\usepackage{verbatim}

\usepackage{pgfplots}
\pgfdeclarelayer{bg}
\pgfsetlayers{bg,main}

\newtheorem{theorem}{Theorem}

\newtheorem{lemma}[theorem]{Lemma}
\newtheorem{proposition}[theorem]{Proposition}

\newcommand{\R}{\mathbb{R}}

\def\bb0{{\mathbb{0}}}

\def\bb{{\mathbf{b}}}

\def\b0{{\mathbf{0}}}

\def\b1{{\mathbf{1}}}

\def\bbR{{\mathbb{R}}}

\def\cV{\mathcal{V}}

\def\cX{\mathcal{X}}

\def\sf0{{\mathsf{0}}}

\def\dist{\text{dist}}
\def\PP{\mathcal{P}}
\def\GG{\mathcal{G}}

\tikzset{every node/.style={font=\footnotesize}}

\usetikzlibrary{intersections} % put this in your preamble
\usetikzlibrary{calc,angles,quotes}

\def\RR{\mathbb{R}}

\def\PP{\mathcal{P}}

\def\BB{\mathcal{B}}

\def\dist{\textsf{dist}}
\def\AA{\mathcal{A}}

\pgfplotsset{compat=1.18}

\title{Lower Bound on the Cumulative Constraint Violation for the OGD+Projection algorithm for Constrained Online Convex Optimization (COCO)}

\author{Haricharan Balasundaram, Karthick Krishna Mahendran, Rahul Vaze}

\date{}

\begin{document}

\maketitle

\begin{abstract}
The problem of constrained online convex optimization is considered, where at each round, once a learner commits to an action $x_t \in \mathcal{X} \subset \mathbb{R}^d$, a convex loss function $f_t$ and a convex constraint function $g_t$ that drives the constraint $g_t(x)\le 0$ are revealed. The objective is to simultaneously minimize the static regret and cumulative constraint violation (CCV) compared to the benchmark that knows the loss functions and constraint functions $f_t$ and $g_t$ for all $t$ ahead of time, and chooses a static optimal action that is feasible with respect to all $g_t(x)\le 0$. Currently, the best known algorithm is OGD+Projection algorithm of \citep{vaze2025osqrttstaticregretinstance} that has  simultaneous regret of $O(\sqrt{T})$ and CCV of $O(T^{1/3})$ for $d=2$ \citep{balasundaram2026breakingosqrttcumulativeconstraint}, and simultaneous regret of $O(\sqrt{T})$  and CCV of $O(\sqrt{T})$ for any $d$ \citep{sarkar2026improvedguaranteesconstrainedonline}. In this paper, we show that the CCV of the OGD+Projection algorithm is $\Omega (T^{\frac{d-1}{2d}})$. This is the first such lower bound result. 

\end{abstract}
\section{Introduction}

We consider constrained online convex optimization (COCO), where on every round $t$, an online algorithm first chooses an admissible action $x_t \in \mathcal{X} \subset \bbR^d$, and then the adversary chooses a convex loss/cost function $f_t: \mathcal{X} \to \mathbb{R}$ and a constraint function of the form $g_{t}(x) \leq 0,$ where $g_{t}: \mathcal{X} \to \mathbb{R}$ is a convex function. Let $\mathcal{X}^\star$ be the feasible set consisting of all admissible actions that satisfy all constraints $g_{t}(x) \leq 0, t\in [T]$. We work under the standard assumption that $\mathcal{X}^\star$ is not empty (called the {\it feasibility assumption}). 

Since $g_{t}$'s are revealed after the action $x_t$ is chosen, an online algorithm need not necessarily take feasible actions on each round, and in addition to the static regret 
\begin{equation} \label{eqn:intro-regret-def}
	\textrm{Regret}_{[1:T]} \equiv \sup_{\{f_t\}_{t=1}^T} \sup_{x^\star \in \mathcal{X^\star}} \textrm{Regret}_T(x^\star),
\end{equation}
where $\textrm{Regret}_T(x^\star) \equiv \sum_{t=1}^T f_t(x_t) - \sum_{t=1}^T f_t(x^\star)$,
an additional metric of interest is the cumulative constraint violation (CCV) defined as 
\begin{equation} \label{eqn:intro-gen-oco-goal}
 	\textrm{CCV}_{[1:T]}  \equiv \sum_{t=1}^T \max(g_{t}(x_t),0) = \sum_{t=1}^T (g_{t}(x_t))^+. 
\end{equation}
The goal is to design an online algorithm to simultaneously achieve a small regret \eqref{eqn:intro-regret-def} with respect to any admissible benchmark $x^\star \in \mathcal{X}^\star$ and a small CCV \eqref{eqn:intro-gen-oco-goal}. 

With constraint sets ${\cal G}_t = \{x\in \cX : g_t(x)\le 0\}$ being convex for all $t$, and the assumption $\mathcal{X}^\star = \cap_t \GG_t  \neq \varnothing $,  note that the sets $S_t = \cap_{\tau=1}^t {\cal G}_\tau$ are convex and are nested, i.e. $S_t\subseteq S_{t-1}$ and $x^\star \in S_t$ for all $t$. Essentially, set $S_t$'s are sufficient to quantify the CCV.

\subsection{Prior Work}

\textbf{Constrained OCO (COCO): (A) Time-invariant constraints:} COCO with time-invariant constraints, \emph{i.e.,} $g_{t} = g, \forall \ t$ \cite{yuan2018online, jenatton2016adaptive, mahdavi2012trading, yi2021regret} has been considered extensively, where the function $g$ is assumed to be known to the algorithm \emph{a priori}. The algorithm is allowed to take actions that are infeasible at any time to avoid the costly projection step of the vanilla projected OGD algorithm and the main objective was to design an \emph{efficient} algorithm  with a small regret and CCV while avoiding  the explicit projection step. 

\textbf{(B) Time-varying constraints:} The more difficult question is solving the COCO problem when the constraint functions, \emph{i.e.} $g_{t}$'s, change arbitrarily with time $t$. In this setting, all prior work on COCO made the feasibility assumption. One popular algorithm for solving COCO considered a Lagrangian function optimization that is updated using the primal and dual variables \cite{yu2017online, pmlr-v70-sun17a, yi2023distributed}. Alternatively, \cite{neely2017online} and \cite{georgios-cautious} used the drift-plus-penalty (DPP) framework  \cite{neely2010stochastic} to solve the COCO, but which needed additional assumptions, e.g. the Slater's condition in \cite{neely2017online} and with weaker form of the feasibility assumption \cite{neely2017online}'s. \cite{guo2022online} obtained the bounds similar to \cite{neely2017online} but without assuming Slater's condition. However, the algorithm \cite{guo2022online} was quite computationally intensive since it requires solving a convex optimization problem on each round. 

 Simultaneous bounds on regret $O (\sqrt{T})$ and CCV $O (\sqrt{T}\log T)$ for COCO were derived in \cite{Sinha2024} with a very simple algorithm that combines the loss function at time $t$ and the CCV accrued till time $t$ in a single loss function, and then executes the online gradient descent (OGD) algorithm on the single loss function with an adaptive step-size. Moreover, the result of \cite{Sinha2024} was shown to be tight in \cite[Lemma 6]{vaze2025osqrttstaticregretinstance} for an explicit input construction for which the algorithm of \cite{Sinha2024} has CCV of $\Omega(\sqrt{T}\log T)$ and that too for $d=1$. This was a consequence of the algorithm in \cite{Sinha2024} disregarding the geometry of the nested sets $S_t$'s and attempting to minimize both the regret and CCV for the worst-case input.

A geometry-aware algorithm was proposed in \cite{vaze2025osqrttstaticregretinstance} that first takes an OGD step with respect to the most recently revealed loss function $f_{t-1}$ and then projects that on to the most recently revealed constraint set $S_{t-1}$. For this algorithm, which we will refer to as {\it OGD+Projection}, an $O (\sqrt{T})$ regret bound and an instance specific CCV bound was established. In particular, the CCV was shown to be $O(1)$ when the sets are `nice' e.g., spheres or axis-aligned polygons, while in the general case, the CCV was shown to be $O(\cV)$, where $\cV$ is a parameter that depends on the distance between successive sets $S_t$'s and the shapes of sets $S_t$'s,  the dimension of the action space, and the diameter of the action space. Since no universal bound on $\cV$ was derived, CCV bound of $\min (\cV, O(\sqrt{T} \log T))$ was established by switching to the algorithm of  \cite{Sinha2024} in case $\cV$ exceeded $O(\sqrt{T})$. Thus, in the worst case, the bounds of \cite{Sinha2024} and \cite{vaze2025osqrttstaticregretinstance} are identical (regret of $O (\sqrt{T})$ and CCV of $O(\sqrt{T} \log T)$), however, for simple instances with $d=1$ for which the CCV bound of $O (\sqrt{T})$ \cite{Sinha2024} is tight, the CCV bound of \cite{vaze2025osqrttstaticregretinstance} is $O(1)$. 

More recently, \cite{balasundaram2026breakingosqrttcumulativeconstraint} showed that the OGD+Projection algorithm \cite{vaze2025osqrttstaticregretinstance} achieves CCV of $O(T^{1/3})$ for $d=2$, which was further extended in \cite{sarkar2026improvedguaranteesconstrainedonline} to show that CCV is  $O(\sqrt{T})$ universally independent of $d$. Thus, currently, the OGD+Projection algorithm is the best known algorithm for COCO. 
\subsection{Our Contributions}
To complement these upper bound results, in this paper for the first time we show that the CCV of OGD+Projection Algorithm is $\Omega (T^{\frac{d-1}{2d}})$. This shows that the upper bound analysis of OGD+Projection is not fundamentally loose.

\section{OGD+Projection Algorithm \citep{vaze2025osqrttstaticregretinstance}}
Let $\PP_{S} (x)$ be the projection of point $x$ on to $S$. 
 The OGD+Projection algorithm plays action $x_t$ at time $t$ for $1 \le t \le T$, where 
\begin{equation}
y_t = x_{t} - \eta_t \nabla f_t, \quad x_{t+1} = \PP_{S_t} (y_t)
\end{equation}
with $\eta_t = \frac{2 D}{G{\sqrt{t}}}$ in $d$ dimensions.  For simpler notation, we will refer to OGD+Projection algorithm as $\AA$. The main result of this paper is as follows.

\begin{theorem}\label{thm:main}
    There exists an instance of COCO of diameter $2 D$ and Lipschitzness $G$ such that the CCV of $\AA$ is $\Omega \left( T^{\frac{d-1}{2d}} \right)$.
\end{theorem}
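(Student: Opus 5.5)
We will construct an explicit adversarial instance and track the iterates of $\AA$ exactly. The idea is to make the OGD step push the iterate outward at a controlled rate while the nested sets $S_t$ shrink just ahead of the algorithm. Then the projected point $x_{t+1}\in S_t$ is repeatedly cut off by the next constraint $g_{t+1}$, and each cut costs a violation of roughly $\eta_t G=2D/\sqrt{t}$ times a geometric factor.

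\textbf{Loss and step size.} Take $\mathcal{X}$ to be the ball of radius $D$ in $\bbR^d$, and take a linear loss $f_t(x)=-G\langle u_t,x\rangle$ with unit directions $u_t$. With this choice $y_t=x_t+\eta_t G u_t$, a step of length $2D/\sqrt{t}$.

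\textbf{Constraints.} Each $g_t$ is a halfspace constraint $g_t(x)=G(\langle a_t,x\rangle-b_t)$, so it is $G$-Lipschitz. The sets $S_t$ are then polytopes whose facets we add one at a time, always keeping a fixed point, say the origin, feasible. This gives the feasibility assumption.

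\textbf{Schedule.} The construction runs in the spirit of a sphere-packing/cap argument. We split the horizon into about $N$ phases. In each phase the adversary selects a new direction $a$ among $N$ directions that are roughly $\epsilon$-separated on the sphere $\mathbb{S}^{d-1}$. Since at most about $\epsilon^{-(d-1)}$ such directions exist, we get $N\approx \epsilon^{-(d-1)}$.

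During the phase, the loss pushes the iterate toward the cap around $a$, and the constraint is a tangent halfspace slightly inside the current boundary. The iterate lands outside it by about $\min(\eta_t G,\ \epsilon^2 D)$; here $\epsilon^2 D$ is the depth of a cap of angular width $\epsilon$ cut from the ball. Projection then moves the iterate onto the new facet. Because earlier facets are separated by angle $\epsilon$, the new cut does not destroy the earlier cuts' "room". So each phase contributes a violation of order $\min(D/\sqrt{t},\epsilon^2 D)$ at least once, and this happens independently across the $N$ caps.

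\textbf{Choice of $\epsilon$.} We need $\epsilon^2 \approx 1/\sqrt{T}$ so that the violation per cap matches the step length $D/\sqrt{T}$, ignoring constants. This gives CCV $\gtrsim N\cdot D/\sqrt{T} = T^{(d-1)/4}/\sqrt{T}$, which is the wrong exponent. So the accounting must be refined.

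The correct balance allows each cap to absorb many steps. With per-step violation about $1/\sqrt{T}$ over $k$ consecutive rounds inside a cap, the total violation is at most about $\epsilon\sqrt{k/T}$-type geometry. We will instead choose the number of caps $N=T^{1/2d}$ ... Concretely, we optimize the tradeoff between $\epsilon$ and the per-cap violation so that total CCV $= N\cdot v(\epsilon)$ with $N=\epsilon^{-(d-1)}$ and $v(\epsilon)\approx \epsilon^{d}\cdot\sqrt{T}\cdot$(step). Setting $\epsilon= T^{-1/(2d)}$ yields $T^{(d-1)/(2d)}$. We will solve for the exponent by equating the cap depth $\epsilon^2$-scale violations with the $\eta_t$ budget and the total time $T$ split over the $N$ phases.

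\textbf{Main obstacle.} The hardest step is making the per-phase accounting rigorous. We must show that after projection onto the intersection of previous facets, the iterate genuinely sits at the new tangent point, rather than being trapped by old facets. We must also show the loss direction can re-drive it a full step outside in each round. The first thing we will try is to keep the iterates on a lower-dimensional sphere-like surface (the facets approximate a sphere of slowly shrinking radius), and to verify by induction that the projection of $y_t$ lands on the most recent facet. The one-dimensional/two-dimensional case ($d=2$, where the bound reads $T^{1/4}$) will be checked first as a template, and then lifted by a covering argument on $\mathbb{S}^{d-1}$.
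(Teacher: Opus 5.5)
There is a genuine gap. The proposal never reaches a construction that yields $T^{\frac{d-1}{2d}}$, and it breaks exactly at the accounting step you flag. Part of the problem is the violation model:
\begin{itemize}
\item In $\AA$ the iterate always satisfies $x_t\in S_{t-1}$. Driving $y_t$ outside $S_t$ costs nothing, because the projection removes it.
\item The only violation is $g_t(x_t)$, i.e.\ $G$ times the depth at which the newly revealed constraint cuts the current iterate, and the adversary chooses that depth freely.
\item So the step length $\eta_t G$ does not set the per-cut violation. It only limits how fast the iterate can be carried to the next point to be cut.
\end{itemize}
The binding limit is geometric. A halfspace cutting depth $h$ from a sphere of radius $\approx D$ removes a cap of half-angle $\approx\sqrt{2h/D}$, so sparing $\epsilon$-separated neighbours forces $h\lesssim\epsilon^2 D$. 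With each of your $N\approx\epsilon^{-(d-1)}$ caps cut once, the CCV is at most about $N\epsilon^2 D=\epsilon^{-(d-3)}D$. The travel budget $N\epsilon D\lesssim\sum_t\eta_t G=O(D\sqrt{T})$ then caps this at $\epsilon D$ for $d=2$ and at $T^{\frac{d-3}{2(d-2)}}D$ for $d\ge 3$, which is always strictly below the target. Your refined formula $v(\epsilon)\approx\epsilon^d\sqrt{T}\cdot(\text{step})$ gives the right exponent only if ``step'' is read as $D$, i.e.\ only if every cap absorbs total violation of order $D$. That contradicts the $\epsilon^2 D$ depth limit you just stated unless you supply a mechanism, and none is given. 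The aside $N=T^{1/2d}$ is also inconsistent with $N=\epsilon^{-(d-1)}$.

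The missing idea is the paper's layering, which is precisely such a mechanism. Take $M=T^{1/d}$ concentric spheres $\partial\mathbb{B}(0,Dr_m)$ with $r_m=1-\frac{m-1}{2M}$. On each, place a $\rho$-separated family of $n\approx\rho^{-(d-1)}$ directions with $\rho=\sqrt{3/M}$. The constraint revealed at each visited point is the halfspace tangent to the \emph{next} sphere in that direction. It cuts depth $D/(2M)$ and removes a cap of half-angle at most $\rho$ on the current sphere, so the other points of the current layer and all deeper layers stay feasible. Sweeping all $n$ directions layer by layer, each direction is cut $M$ times and absorbs total violation $D/2$. With $P=Mn$ phases, the CCV is therefore at least $P\cdot G\frac{D}{2M}=\frac{GDn}{2}=\Omega(T^{\frac{d-1}{2d}})$. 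The choice $M=T^{1/d}$ is what closes the time budget: each phase has $\Delta=\lfloor T/P\rfloor=\Theta(T^{\frac{d-1}{2d}})$ rounds, giving movement capacity at least $2(\Delta-1)D/\sqrt{T}=\Omega(T^{-\frac{1}{2d}})$. That matches the required per-phase travel $O(\rho D)$ once $\tilde c_d$ is taken small.

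Two further ingredients are needed, and a packing or covering count does not supply them:
\begin{itemize}
\item An \emph{ordering} of the $\rho$-separated directions with consecutive angular gaps $O(\rho)$. This is Theorem~\ref{thm:unit_vector_theorem}, built by a boustrophedon sweep over longitudes. It is combined with the layer rotation $R_m=Q^{m-1}$, so that passing between layers is a radial move of length $D/(2M)$.
\item A resolution of your trapping worry. The paper checks $z_{p+1}\in K_p$ and uses convexity to get $[q_p,z_{p+1}]\subseteq K_p$, so the projection is inactive during travel. The gradient magnitudes $\gamma_t\in[0,G]$, rather than your fixed magnitude $G$, are tuned so the iterate lands exactly on $z_{p+1}$.
\end{itemize}
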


To prove Theorem \ref{thm:main}, we will need the following construction summarized in Theorem \ref{thm:unit_vector_theorem}.

For \(d\ge 2\), define the set of unit vectors in $d$ dimensions,
\begin{equation}
\mathcal S^d=\{u\in\mathbb R^d:\|u\|=1\}.
\end{equation}
We equip \(\mathcal S^{d}\) with the angular separation,
\begin{equation}
\theta_{\mathcal S}(u,v)=\cos^{-1}(u^T v) \forall u, v \in \mathcal S^d.
\end{equation}

\begin{theorem}
\label{thm:unit_vector_theorem}
Let \(d\ge 2\). There exist constants \(A_d \ge 1\) and \(c_d>0\), depending only on \(d\), such that for every $\rho\in (0,\pi/2],$ there is an ordered collection of vectors
\begin{equation}
\mathcal U_d(\rho)=\left(u_1,\dots,u_{N_d(\rho)}\right)\subset \mathcal S^{d}
\end{equation}
with the following properties:

\begin{enumerate}
    \item For any two distinct indices \(i\neq j\), $\theta_{\mathcal S}(u_i,u_j)\ge \rho$.
    \item For every \(i=1,\dots,N_d(\rho)-1\), $
    \theta_{\mathcal S}(u_i,u_{i+1})\le A_d\rho$.
    \item The number of such vectors satisfies $N_d(\rho) \ge c_d\rho^{-(d-1)}$.
\end{enumerate}
\end{theorem}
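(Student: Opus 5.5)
We construct the ordered collection by fixing a maximal $\rho$-separated set on the sphere and then ordering it along a short path. First, take $\mathcal N\subset\mathcal S^d$ to be a maximal subset with pairwise angular separation at least $\rho$; such a set exists by Zorn's lemma, or simply by greedy selection, since the sphere is compact and any $\rho$-separated set is finite. Property 1 then holds for any ordering of $\mathcal N$.

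\textbf{Step 1: the count (property 3).} By maximality, the geodesic balls $B(u,\rho)$ with $u\in\mathcal N$ cover $\mathcal S^d$. A geodesic ball of radius $r\le\pi$ on the $(d-1)$-dimensional unit sphere has normalized surface measure at most $C_d r^{d-1}$. The covering therefore gives $|\mathcal N|\,C_d\rho^{d-1}\ge 1$, that is, $|\mathcal N|\ge c_d\rho^{-(d-1)}$ with $c_d=1/C_d$. Because $\rho\le\pi/2$, this already forces $|\mathcal N|\ge 2$ after adjusting constants.

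\textbf{Step 2: the ordering (property 2), the main step.} Form the graph $\Gamma$ on $\mathcal N$ whose edges join $u,v$ whenever $\theta_{\mathcal S}(u,v)\le 2\rho$. This graph is connected. To see this, suppose $\mathcal N=P\sqcup Q$ with no edges between $P$ and $Q$. Then the closed sets $\bigcup_{u\in P}\overline B(u,\rho)$ and $\bigcup_{v\in Q}\overline B(v,\rho)$ cover the sphere. Since the sphere is connected for $d\ge2$, the two sets intersect, which produces $u\in P$ and $v\in Q$ with $\theta_{\mathcal S}(u,v)\le2\rho$, a contradiction.

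Next take a spanning tree of $\Gamma$ and list $\mathcal N$ in the order of first visits by a depth-first traversal. Consecutive entries in this list are joined by the tree path back up to a common ancestor, and that path can be long, so a plain DFS order does not give a bounded step. Instead we use the standard trick that the cube of a connected graph is Hamiltonian-connected (Sekanina, Karaganis). Concretely, list the vertices by a DFS in which a vertex at even depth is output on entry and a vertex at odd depth is output on exit. In this order, consecutive vertices are at tree distance at most $3$.

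Each tree edge has angular length at most $2\rho$, so by the triangle inequality for the geodesic metric $\theta_{\mathcal S}$, consecutive vectors satisfy $\theta_{\mathcal S}(u_i,u_{i+1})\le 6\rho$. Thus $A_d=6$ works, and it is in fact independent of $d$.

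\textbf{Obstacle and fallback.} The expected obstacle is verifying the cube-of-a-tree ordering, which is a standard induction on subtrees: show that each subtree's listing starts and ends within distance $1$ of its root. If we prefer to avoid it, there is a fallback specific to this problem. First order a $\rho$-net explicitly through the hyperspherical coordinate grid, traversing it in boustrophedon (snake) order coordinate by coordinate. Then check that the step between consecutive points is $O_d(\rho)$ and that the separation holds after thinning. Near the poles this check becomes messy, which is why the graph argument is the primary plan.
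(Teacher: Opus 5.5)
Your proof is correct, and it takes a genuinely different route from the paper.

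\textbf{The paper's route.} The paper constructs $\mathcal U_d(\rho)$ explicitly by induction on dimension. The $(k-1)$-dimensional collection at scale $2\rho$ supplies longitudes. Each longitude carries points at polar angles $\alpha_\ell=\pi/4+(\ell-1)\rho$ confined to $[\pi/4,3\pi/4]$. Then $\sin^2\alpha_\ell\ge 1/2$, which converts $2\rho$-separation of longitudes into $\rho$-separation on a common latitude. The points are listed in boustrophedon order. To close the induction, the paper must also carry a cyclic bound $\theta_{\mathcal S}(u_{N},u_1)\le A_k\rho$ and the parity of $N_k(\rho)$.

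So the paper's proof is essentially your ``fallback''. It avoids the pole difficulty by not building a net at all: the polar caps are simply discarded, at the cost of a constant factor in the count. The resulting constants are $A_d=2^{d-1}$ and $c_d=\pi^{d-1}/2^{d^2}$.

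\textbf{Your route.} You use a maximal $\rho$-packing, a cap-measure covering bound for the count, connectivity of the $2\rho$-graph from connectedness of the sphere, and the even-depth-on-entry/odd-depth-on-exit listing of a spanning tree. This is existential rather than explicit. In exchange it is shorter, needs no induction on $d$, and gives better constants. You get $A=6$ independent of $d$, and $c_d$ equal to the reciprocal of the cap-measure constant rather than something decaying like $2^{-d^2}$.

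The subtree invariant you state does hold. Applied at the root, it shows the last listed vertex is within tree distance $1$ of the first. So you also get the paper's cyclic property, with constant $2\rho$, at no extra cost.

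One small precision. Greedy selection terminates because compactness gives a \emph{uniform} bound on the size of $\rho$-separated sets, for instance via a finite cover by open balls of radius $\rho/2$. Finiteness of each individual separated set is not enough on its own.

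Theorem~\ref{thm:main} only uses the existence of a collection with properties 1--3, so either proof serves the application. The paper's version additionally gives the vectors explicitly.
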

The proof of Theorem \ref{thm:unit_vector_theorem} proceeds by induction and is relegated to Section~\ref{sec:unit_vector_proof}.
We next use Theorem~\ref{thm:unit_vector_theorem} for proving Theorem \ref{thm:main} as follows. Let \(A_d \ge 1\) and \(c_d>0\) be the constants from Theorem~\ref{thm:unit_vector_theorem}. 

\subsection{Construction of the vectors and constraint sets}

Take $\mathcal{X} = \BB(0, D)$, where $\BB(a, b)$ denotes a ball in the $\ell_2$ norm with center $a$ and radius $b$ in $d$ dimensions. This set has diameter $2D$. Let $M=T^{1/d}$ (we assume $T$ is such that $M$ is an integer) and define the radii
\begin{equation}
r_m=1-\frac{m-1}{2M},
\qquad
m=1,\dots,M+1.
\label{eqn:r_m_definition}
\end{equation}

Clearly, $
\frac 12 \le r_m \le 1 \, \, \forall m = 1, \dots, M+1. $
Let \(D\ge 1\) be a fixed constant depending only on \(d\), which will be chosen later. Define the concentric spheres with shrinking radii,
\begin{equation}
C_m=\partial \mathbb B(0,Dr_m),
\qquad
m=1,\dots,M+1.
\end{equation}

Suppose that we use Theorem~\ref{thm:unit_vector_theorem} to obtain unit vectors and scale the magnitude of each of the unit vectors by $r_m$ so that they lie on $C_m$. Let this ordered collection of vectors be $\chi_m$ (this will be defined precisely later). Let $x \in \chi_m$ and let $y = \PP_{C_{m+1}} (x)$ be the projection of this vector on to the next smaller concentric sphere. Our lower bound instance is such that the corresponding constraint set shown for this vector is a hyperplane which is tangent to $C_{m+1}$ at $y$ and contains the origin. The value of $\rho$ is selected so that this constraint does not remove any of the other vectors from $\chi_{m}$, as calculated below.

\begin{figure}[t]
\centering
\begin{tikzpicture}[
    scale=1.15,
    x={(1cm,0cm)},
    y={(0.55cm,0.28cm)},
    z={(0cm,1cm)},
    >=Latex,
    line cap=round,
    line join=round
]

% Illustrative radii for the picture only
\pgfmathsetmacro{\Rm}{2.55}
\pgfmathsetmacro{\Rp}{1.75}
\pgfmathsetmacro{\capr}{sqrt(\Rm*\Rm-\Rp*\Rp)}
\pgfmathsetmacro{\theta}{acos(\Rp/\Rm)}

\coordinate (O)       at (0,0,0);
\coordinate (xpt)     at (0,0,\Rm);
\coordinate (ypt)     at (0,0,\Rp);
\coordinate (splus)   at (\capr,0,\Rp);
\coordinate (sminus)  at (-\capr,0,\Rp);

% Tangent hyperplane H_i^{(m)}
\begin{scope}[canvas is xy plane at z=\Rp]
  \fill[blue!8,opacity=.75]
    (-2.75,-1.45) -- (2.75,-1.45) --
    (2.75,1.45) -- (-2.75,1.45) -- cycle;

  \draw[blue!50]
    (-2.75,-1.45) -- (2.75,-1.45) --
    (2.75,1.45) -- (-2.75,1.45) -- cycle;
\end{scope}

% Outer sphere C_m and cap cross-section
\begin{scope}[canvas is xz plane at y=0]
  \draw[gray!70] (0,0) circle[radius=\Rm];

  % Dark cap: region above the tangent plane, ending at s_- and s_+
  \fill[orange!75!black,opacity=.55]
    (90-\theta:\Rm)
    arc[start angle={90-\theta},end angle={90+\theta},radius=\Rm]
    -- cycle;

  \draw[orange!85!black,very thick]
    (90-\theta:\Rm)
    arc[start angle={90-\theta},end angle={90+\theta},radius=\Rm];
\end{scope}

% Inner sphere C_{m+1} cross-section
\begin{scope}[canvas is xz plane at y=0]
  \draw[gray!70,densely dotted] (0,0) circle[radius=\Rp];
\end{scope}

% Radius lines and tangent-plane chord in the displayed cross-section
\draw[thick] (O) -- (xpt);
\draw[thick] (O) -- (splus);
\draw[thick] (O) -- (sminus);
\draw[dashed] (sminus) -- (splus);

% Points
\fill (xpt) circle[radius=1.5pt]
  node[above] {$x$};

\fill (ypt) circle[radius=1.5pt]
  node[below left] {$y$};

\fill (splus) circle[radius=1.4pt]
  node[right] {$s_+$};

\fill (sminus) circle[radius=1.4pt]
  node[left] {$s_-$};

% Half-angle theta_m
\begin{scope}[canvas is xz plane at y=0]
  \draw[-Latex]
    (90:0.55)
    arc[start angle=90,end angle={90-\theta},radius=0.55];

  \node at ({90-\theta/2}:0.82) {$\vartheta_m$};
\end{scope}

% Minimal labels
\node[gray!70!black] at (-2.15,0,0.8) {$C_m$};
\node[gray!70!black] at (-1.55,0,0.15) {$C_{m+1}$};
\node[blue!50!black,anchor=west] at (2.0,1.0,\Rp) {$\ \ \ \ \ \ \ H_i^{(m)}$};

\end{tikzpicture}
\caption{The tangent hyperplane \(H_i^{(m)}\) to \(C_{m+1}\) at \(y\) cuts from \(C_m\) a spherical cap centered at \(x\). The points \(s_-\) and \(s_+\) are the two intersection points in the displayed cross-section, and the half-angle of the cap is \(\vartheta_m\).}
\label{fig:spherical-cap-half-angle}
\end{figure}

For some $x \in \chi_m$, we compute the half-angle of the spherical cap formed by a hyperplane which is tangent to \(C_{m+1}\) at $y$. For \(m=1,\dots,M\), let $
\vartheta_m = \cos^{-1}\left(\frac{r_{m+1}}{r_m}\right)$, as shown in Fig.~\ref{fig:spherical-cap-half-angle}. Note that,
\begin{equation}
1-\cos \vartheta_m = 1-\frac{r_{m+1}}{r_m} = \frac{1}{2M-m+1} \le \frac1M \forall m=1,\dots,M.
\end{equation}
Using $1-\cos x\ge \frac{x^2}{3}$ for $0\le x\le \frac{\pi}{2}$, we get
\begin{equation}
\vartheta_m\le \sqrt{\frac{3}{M}} = \sqrt{3}\,T^{-1/(2d)} \forall m=1,\dots,M. 
\end{equation}

We choose $\rho$ such that $\vartheta_m \le \rho$, so that removing the one vector $x$ from $\chi_m$ does not remove any other vectors from $\chi_m$. Take
\begin{equation}
\rho = \sqrt{\frac 3M} = \sqrt{3}\,T^{-1/(2d)}, \label{eqn:rho_definition}
\end{equation}
where $T$ is assumed to be large enough that $\rho \le \frac \pi 2$.

By Theorem~\ref{thm:unit_vector_theorem}, there exists an ordered collection $\mathcal U_d(\rho)=\left(v_1,\dots,v_{N_d(\rho)}\right)\subset \mathcal S^{d}$ satisfying all the properties in the theorem statement. For the choice of $\rho$ from~\eqref{eqn:rho_definition}, consider $n$ such that
\begin{equation}
N_d(\rho) \ge c_d\rho^{-(d-1)} = c_d3^{-(d-1)/2}T^{(d-1)/(2d)} \ge \tilde{c}_d T^{(d-1)/(2d)} = n.
\label{eqn:n_definition}
\end{equation}
for some $\tilde{c}_d \le c_d3^{-(d-1)/2}$ which will be chosen later. We assume $T$ is such that $n$ is an integer.

We use only the first \(n\) unit vectors $v_1,\dots,v_n$ hereon and ignore the other $N_d(\rho) - n$ unit vectors. These are `primary' unit vectors. These vectors will next be rotated and scaled by $r_m$ for $m = 1, \dots, M$ to obtain the $\chi_m$'s, i.e. the vectors we actually use in our construction.

\paragraph{Rotating the layers.}

We now rotate the copy of the ordered vector set from one layer to the next. This is to ensure that the last vector of the $m$th layer (i.e. $r_m v_n$) and the first vector of the $(m + 1)$th layer (i.e. $r_{m + 1} v_1$) are along the same direction, to ensure that the distance when moving from the last vector of one layer to the first vector of the subsequent layer is small. Since \(d\ge 2\), there exists an orthonormal rotation matrix $Q$ in $d$ dimensions such that
\begin{equation}
Qv_1 = v_n.
\end{equation}

For each layer \(m=1,\dots,M\), define
\begin{equation}
R_m=Q^{m-1}.
\label{eqn:rotation_def}
\end{equation}
Then $R_m$ is also a valid rotation matrix. Define the vectors on layer \(m\) by
\begin{equation}
u_i^{(m)}=R_m v_i,
\qquad
i=1,\dots,n.
\end{equation}
The set $\chi_m$ is the ordered collection of such vectors, i.e. 
\begin{equation}
    \chi_m = (u_1^{(m)}, u_2^{(m)}, \dots, u_n^{(m)}) \qquad m=1,\dots,M.
\end{equation}

Because rotations preserve angular distances,
\begin{equation}
\theta_{\mathcal S}\left(u_i^{(m)},u_j^{(m)}\right) \ge \rho \qquad \text{for all }i\neq j,
\end{equation}
\begin{equation}
\theta_{\mathcal S}\left(u_i^{(m)},u_{i+1}^{(m)}\right) \le A_d\rho \qquad \text{for }i=1,\dots,n-1.
\end{equation}

From~\eqref{eqn:rotation_def},
\begin{equation}
u_1^{(m+1)} = R_{m+1}v_1 = R_m v_n = u_n^{(m)}.
\end{equation}
Thus the last vector of layer \(m\) and the first vector of layer \(m+1\) are exactly aligned. This is the desired ``below each other'' property.

Finally, define the vectors
\begin{equation}
z_i^{(m)}=Dr_m u_i^{(m)} \in C_m, \qquad m=1 \dots,M, \qquad i=1,\dots,n.
\end{equation}
and let $Z$ be the ordered collection of vectors, defined as follows:
\begin{equation}
Z = (z_1^{(1)},\dots,z_n^{(1)},
z_1^{(2)},\dots,z_n^{(2)},
\dots,
z_1^{(M)},\dots,z_n^{(M)}).
\end{equation}

Next, we compute an upper bound for $\|z_p - z_{p+1}\|$ for $z_p, z_{p+1} \in Z$. Essentially, we attempt to compute the maximum distance between consecutive vectors in the ordered collection $Z$.

For consecutive vectors within the same layer,
\begin{align}
\left\|z_{i+1}^{(m)}-z_i^{(m)}\right\| &= Dr_m\left\|u_{i+1}^{(m)}-u_i^{(m)}\right\| \overset{(a)}{=} 2 D r_m \sin \left( \frac{\theta_{\mathcal S}\left(u_i^{(m)},u_{i+1}^{(m)}\right)}{2} \right) \\
&\overset{(b)}{\le} Dr_m \theta_{\mathcal S}\left(u_i^{(m)},u_{i+1}^{(m)}\right) \overset{(c)}{\le} DA_d\rho,
\label{eqn:consecutive_upper_bound}
\end{align}
where $(a)$ follows from trigonometric identities and using $\|u_{i}^{(m)}\| = \|u_{i+1}^{(m)}\| = 1$, $(b)$ uses $\sin x \le x$, and $(c)$ uses $r_m \le 1$.

For the transition from the last vector of layer \(m\) to the first vector of layer \(m+1\),
\begin{align}
\left\|z_1^{(m+1)}-z_n^{(m)}\right\|
&= \left\|Dr_{m+1}u_n^{(m)}-Dr_m u_n^{(m)}\right\| \\
&= D(r_m-r_{m+1}) = \frac{D}{2M}.
\label{eqn:consecutive_upper_bound_2}
\end{align}
where we use the definition of $r_m$ from~\eqref{eqn:r_m_definition}.

Since $\rho=\sqrt{\frac{3}{M}}$ from~\eqref{eqn:rho_definition}, we have $\frac{D}{2M} = D \frac{\rho^2}{6}$. Thus, combining~\eqref{eqn:consecutive_upper_bound} and~\eqref{eqn:consecutive_upper_bound_2}, we have
\begin{equation}
\|z_p - z_{p+1}\| \le \max\left(D A_d \rho, \frac{D \rho^2}{6} \right) = D A_d \rho \forall z_p, z_{p+1} \in Z,
\end{equation}
where we use $\frac 16 \rho \le A_d$ (recall that $A_d \ge 1$ and $\rho \le \frac{\pi}{2}$). This gives us the following proposition.
\begin{proposition}
\label{prop:small_z_distance}
For $z_{p}, z_{p+1} \in Z$, we have
\begin{equation}
\left\|z_{p+1}-z_p\right\| \le D A_d \rho = D A_d\sqrt{3}\,T^{-1/(2d)}.
\end{equation}
\end{proposition}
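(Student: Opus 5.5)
The plan is a case split according to where the consecutive pair $z_p, z_{p+1}$ sits in the ordered collection $Z$. $Z$ is the concatenation of the layers $\chi_1,\dots,\chi_M$, each scaled by $Dr_m$. So there are exactly two kinds of consecutive pairs:
\begin{itemize}
\item \emph{within-layer} pairs $(z_i^{(m)}, z_{i+1}^{(m)})$ with $1\le i\le n-1$;
\item \emph{between-layer} pairs $(z_n^{(m)}, z_1^{(m+1)})$ with $1\le m\le M-1$.
\end{itemize}
I would bound each case separately and then take the maximum.

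For within-layer pairs, I use that $\|u_i^{(m)}\|=\|u_{i+1}^{(m)}\|=1$. The chord between two unit vectors at angle $\theta$ has length $2\sin(\theta/2)\le\theta$. The rotation $R_m$ preserves angles, so property~2 of Theorem~\ref{thm:unit_vector_theorem} gives $\theta_{\mathcal S}(u_i^{(m)},u_{i+1}^{(m)})\le A_d\rho$. Together with $r_m\le 1$ this yields
\[
\|z_{i+1}^{(m)}-z_i^{(m)}\| \le D r_m A_d\rho \le DA_d\rho,
\]
which is \eqref{eqn:consecutive_upper_bound}.

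For between-layer pairs, the key fact is the alignment $u_1^{(m+1)}=R_{m+1}v_1=R_mQv_1=R_mv_n=u_n^{(m)}$, which follows from $Qv_1=v_n$ and $R_m=Q^{m-1}$. The two points therefore lie on the same ray, and their distance is just the difference of radii, $D(r_m-r_{m+1})=D/(2M)$, as in \eqref{eqn:consecutive_upper_bound_2}.

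Finally I compare the two bounds. Substituting $M=3/\rho^2$ from \eqref{eqn:rho_definition} gives $D/(2M)=D\rho^2/6$. Since $\rho\le\pi/2<6$ and $A_d\ge1$, we get $\rho^2/6\le\rho\le A_d\rho$. Hence every consecutive pair satisfies $\|z_{p+1}-z_p\|\le DA_d\rho$, and substituting $\rho=\sqrt3\,T^{-1/(2d)}$ gives the stated form.

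No step is a real obstacle here. The only point needing care is checking that the rotation indices line up, i.e. that $R_{m+1}v_1=R_mv_n$, which is exactly why $Q$ was chosen so that $Qv_1=v_n$. It is also worth noting that the existence of such a rotation $Q$ uses $d\ge 2$.
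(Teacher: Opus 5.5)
Your proposal is correct and follows the paper's argument essentially step for step. You use the same split into within-layer and between-layer pairs, the same chord bound $2\sin(\theta/2)\le\theta$ with $r_m\le 1$, the same alignment $u_1^{(m+1)}=u_n^{(m)}$ giving distance $D/(2M)=D\rho^2/6$, and the same final comparison $\rho/6\le A_d$.
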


\paragraph{Constraint halfspaces.}

For each \(m=1,\dots,M\) and \(i=1,\dots,n\), define the projection of \(z_i^{(m)}\) onto the next sphere \(C_{m+1}\) as $p_i^{(m)} = \PP_{\BB(0, D r_{m+1})} (z_i^{(m)})$. Note that
\begin{equation}
p_i^{(m)} = Dr_{m+1} u_i^{(m)}.
\end{equation}
Let \(H_i^{(m)}\) be the closed halfspace bounded by the tangent hyperplane to $C_{m+1}$ at \(p_i^{(m)}\) and containing the origin:
\begin{equation}
H_i^{(m)} = \left\{x\in\mathbb R^d: x^T u_i^{(m)} \le Dr_{m+1} \right\}.
\end{equation}

The following proposition can be concluded.
\begin{proposition}
\label{prop:z_i_belongs_1}
\begin{equation}
\mathbb B(0,Dr_{m+1}) \subset H_i^{(m)} \quad \forall 1 \le i \le n, 1 \le m \le M.
\end{equation}    
\end{proposition}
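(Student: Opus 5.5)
The claim follows from Cauchy--Schwarz together with the fact that each $u_i^{(m)}$ is a unit vector. Fix $m\in\{1,\dots,M\}$ and $i\in\{1,\dots,n\}$, and let $x\in\mathbb B(0,Dr_{m+1})$, so that $\|x\|\le Dr_{m+1}$. We must show that $x^T u_i^{(m)}\le Dr_{m+1}$.

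First, we note that $\|u_i^{(m)}\|=1$. This holds because $u_i^{(m)}=R_m v_i$, where $v_i\in\mathcal S^d$ is supplied by Theorem~\ref{thm:unit_vector_theorem}, and $R_m=Q^{m-1}$ is a product of orthonormal matrices, which preserves Euclidean norms. Cauchy--Schwarz then gives
\begin{equation*}
x^T u_i^{(m)} \le \|x\|\,\|u_i^{(m)}\| = \|x\| \le Dr_{m+1}.
\end{equation*}
Hence $x\in H_i^{(m)}$. Since $x$, $i$ and $m$ were arbitrary, the inclusion holds for all $1\le i\le n$ and $1\le m\le M$.

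Geometrically, $H_i^{(m)}$ is the supporting halfspace of the ball $\mathbb B(0,Dr_{m+1})$ at the boundary point $p_i^{(m)}=Dr_{m+1}u_i^{(m)}$, with outward normal $u_i^{(m)}$. Equality $x^Tu_i^{(m)}=Dr_{m+1}$ occurs only at $x=p_i^{(m)}$. Read as non-strict containment, $\subset$ is therefore exactly right: the ball touches the bounding hyperplane at a single point.

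The only step that needs any care is checking that $u_i^{(m)}$ has unit norm, that is, that the layer rotations $R_m$ are genuine orthogonal maps. This follows from how $Q$ was chosen, so I expect no real obstacle.
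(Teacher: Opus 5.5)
Your proof is correct and takes the same approach as the paper. The paper gives no written proof; it asserts the inclusion directly from the definition of $H_i^{(m)}$ as the supporting halfspace of $\mathbb B(0,Dr_{m+1})$ at $p_i^{(m)}$ with unit normal $u_i^{(m)}$, and your Cauchy--Schwarz argument using $\|u_i^{(m)}\|=1$ is exactly the one-line verification behind that assertion.
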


The halfspace constraint \(H_i^{(m)}\) removes from \(C_m\) precisely the spherical cap centered at \(z_i^{(m)}\) with half-angle \(\vartheta_m\). Because the vectors in each layer are \(\rho\)-separated and \(\vartheta_m\le \rho\), the cap removed by \(H_i^{(m)}\) contains no other vector in $\chi_m$. Thus, we get the following proposition.
\begin{proposition}
\label{prop:z_i_belongs_2}
\begin{equation}
z_j^{(m)}=Dr_m u_j^{(m)}\in H_i^{(m)} \text{ for } 1 \le i, j \le n, i \neq j, 1 \le m \le M.
\end{equation}    
\end{proposition}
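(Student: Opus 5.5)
The claim reduces to a single inner-product inequality. By the definition of $H_i^{(m)}$, the point $z_j^{(m)}=Dr_m u_j^{(m)}$ lies in $H_i^{(m)}$ exactly when
\begin{equation*}
Dr_m \left(u_j^{(m)}\right)^T u_i^{(m)} \le D r_{m+1},
\end{equation*}
that is, when $\cos \theta_{\mathcal S}\left(u_i^{(m)},u_j^{(m)}\right) \le r_{m+1}/r_m$. We verify this inequality using the angular separation of the layer and the definition of the cap half-angle $\vartheta_m$.

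First, for $i\neq j$ the rotated vectors in layer $m$ are still $\rho$-separated, so $\theta_{\mathcal S}\left(u_i^{(m)},u_j^{(m)}\right)\ge \rho$. Second, we recall the bound $\vartheta_m\le\sqrt{3/M}=\rho$ from \eqref{eqn:rho_definition}. Its derivation applies $1-\cos x\ge x^2/3$, which holds on $[0,\pi/2]$, and this is legitimate because $\vartheta_m\in[0,\pi/2)$ whenever $0<r_{m+1}/r_m<1$. Combining the two facts gives
\begin{equation*}
\theta_{\mathcal S}\left(u_i^{(m)},u_j^{(m)}\right)\ge\rho\ge\vartheta_m .
\end{equation*}

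Next, since $\cos$ is decreasing on $[0,\pi]$ and both angles lie in that interval, we obtain
\begin{equation*}
\left(u_i^{(m)}\right)^T u_j^{(m)} = \cos\theta_{\mathcal S}\left(u_i^{(m)},u_j^{(m)}\right) \le \cos\vartheta_m = \frac{r_{m+1}}{r_m}.
\end{equation*}
Multiplying through by $Dr_m>0$ yields exactly the required inequality, so $z_j^{(m)}\in H_i^{(m)}$.

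The argument has no real obstacle; the only points needing care are the domain checks. The inequality $1-\cos x\ge x^2/3$ must be applied on its valid range, and the monotonicity of $\cos$ on $[0,\pi]$ is what converts the angle comparison into the inner-product comparison. If $\rho$ were smaller than $\vartheta_m$, the cap removed by $H_i^{(m)}$ could swallow a neighbouring vector and the claim would fail. This is why $\rho$ was chosen as the upper bound on $\vartheta_m$.
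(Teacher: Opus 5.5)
Your proposal is correct and is essentially the paper's argument. The paper notes that $H_i^{(m)}$ removes from $C_m$ only the spherical cap of half-angle $\vartheta_m\le\rho$ centered at $z_i^{(m)}$, which cannot contain any other $\rho$-separated vector of $\chi_m$; your inequality $\left(u_i^{(m)}\right)^T u_j^{(m)}\le\cos\vartheta_m = r_{m+1}/r_m$ is the explicit inner-product form of that cap statement, and your domain checks additionally cover the boundary case $\theta_{\mathcal S}=\vartheta_m$, which lands on the closed halfspace.
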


Moreover, since \(u_i^{(m)}\) is a unit normal vector to \(H_i^{(m)}\),
\begin{equation}
\label{eqn:projection_cost}
\dist\left(z_i^{(m)},H_i^{(m)}\right) = Dr_m-Dr_{m+1} = \frac{D}{2M} = \frac{D}{2}T^{-1/d}.
\end{equation}

\begin{figure}[t]
\centering
\begin{tikzpicture}[scale=1.0,>=Latex]
    % Radii: M=2 => r1=1, r2=3/4, r3=1/2
    \def\D{4.0}
    \def\rone{4.0}   % D
    \def\rtwo{3.0}   % 3D/4
    \def\rthree{2.0} % D/2
    \pgfmathsetmacro{\thcap}{acos(\rtwo/\rone)} % arccos(3/4)

    % Concentric circles
    \draw[thick] (0,0) circle (\rone);
    \draw[thick] (0,0) circle (\rtwo);
    \draw[thick] (0,0) circle (\rthree);

    \fill (0,0) circle (1.2pt);
    \node[below left] at (0,0) {$0$};

    % Labels for circles
    \node[font=\small] at (135:\rone+0.35) {$C_1$};
    \node[font=\small] at (145:\rtwo+0.35) {$C_2$};
    \node[font=\small] at (155:\rthree+0.35) {$C_3$};

    % Outer layer: z_i^(1), 6 equally spaced
    \foreach \i in {1,...,6}{
        \pgfmathsetmacro{\angA}{(\i-1)*60}
        \coordinate (zA\i) at (\angA:\rone);
        \draw[gray!40,dashed] (0,0) -- (zA\i);
        \fill[blue] (zA\i) circle (2.3pt);
        \node[font=\scriptsize, text=blue!70!black]
            at (\angA:\rone+0.42)
            {$z_{\i}^{(1)}$};
    }

    % Middle layer: z_i^(2), rotated so z_1^(2) aligns with z_6^(1)
    \foreach \i in {1,...,6}{
        \pgfmathsetmacro{\angB}{-60+(\i-1)*60}
        \coordinate (zB\i) at (\angB:\rtwo);
        \fill[red] (zB\i) circle (2.3pt);
        \node[font=\scriptsize, text=red!70!black]
            at (\angB:\rtwo-0.42)
            {$z_{\i}^{(2)}$};
    }

    % Radial transition from z_6^(1) to z_1^(2)
    \draw[purple, very thick, <->] (300:\rtwo) -- (300:\rone);
    \node[font=\scriptsize, text=purple] at (300:3.55)
        {$\frac{D}{2M}=\frac{D}{4}$};

    % Projection point q_1^(1) from z_1^(1) onto H_1^(1)
    \coordinate (q) at (0:\rtwo);

    % Projection arrow
    \draw[orange!90!black, very thick, ->] (zA1) -- (q);

    % Projection point
    \fill[orange!90!black] (q) circle (2.6pt);

    % Projection label on the top-left
    \node[font=\scriptsize, text=orange!90!black, anchor=south east]
        at ($(q)+(-0.25,0.45)$)
        {$\PP_{H_1^{(1)}}(z_1^{(1)})$};

    % Tangent line: boundary of H_1^(1), vertical at x = rtwo
    \draw[orange!80!black, thick]
        (\rtwo,-3.1) -- (\rtwo,3.1);

    \node[font=\scriptsize, text=orange!80!black, anchor=west]
        at ($(q)+(0.15,2.85)$)
        {$\partial H_1^{(1)}$};

    % Removed cap on C_1 around z_1^(1)
    \draw[orange!90!black, very thick]
        ({-\thcap}:\rone)
        arc[start angle={-\thcap}, end angle={\thcap}, radius=\rone];
\end{tikzpicture}
\caption{Nested construction in $d=2$ with $M=2$, showing the cap removal and projection associated with \(z_1^{(1)}\). Note the rotation of the vectors in the second layer to ensure that $z_6^{(1)}$ and $z_1^{(2)}$ are directly below each other.}
\label{fig:nested-construction-2d}
\end{figure}

The entire construction of the constraint half-planes is depicted in $d = 2$ in~\ref{fig:nested-construction-2d}.

\paragraph{Constraint sets.} We next define the constraint sets. Recall that the constraint sets are given by $\mathcal{G}_t = \{x \in \mathcal{X}: g_t(x) \le 0\}$ and $S_t = \mathcal{G}_t \cap S_{t - 1}$ for $1 \le t \le T$, where we take $S_0 = \mathcal{X}$. Note that the $S_t$'s are nested. We first define the $S_t$'s and then define the corresponding constraint functions ($g_t$'s) from these sets.

From the halfspaces, we first define sets $K_p$, for $1 \le p \le P = |Z| = Mn$. The $S_t$'s consist of $P$ phases and in phase $p$, we show the same constraint set  $K_p$ repeatedly for $\Delta = \lfloor \frac TP \rfloor$ many time-slots. This accounts for the constraints shown in the $T$ time slots.

Recall that $P=Mn$. We index phases by
\begin{equation}
p=(m-1)n+i, \qquad m=1,\dots,M, \qquad i=1,\dots,n.
\end{equation}
Write
\begin{equation}
H_p=H_i^{(m)}
\qquad
\text{and}
\qquad
z_p=z_i^{(m)}.
\end{equation}
We define the distinct constraint sets recursively by
\begin{equation}
K_0=\mathbb B(0,Dr_1); \qquad K_p=K_{p-1}\cap H_p \quad \text{ for } p=1,\dots,P.
\end{equation}
Since \(K_0\) is convex and every \(H_p\) is a halfspace, each \(K_p\) is convex and hence a valid constraint set.

The construction has the following useful feasibility property. Consider $z_p \in Z, 1 \le p \le P$, where $p = (m - 1) n + i$ for some $m, i$. Note that the halfspace constraints in the $l$th layer does not remove any point in the subsequent layer (since it is tangential to the subsequent layer). In other words, using Proposition~\ref{prop:z_i_belongs_1} we have $\BB(0, D r_m) \subseteq \BB(0, D r_{l + 1}) \subset H_j^{(l)} \forall 1 \le j \le n, 1 \le l < m \le M$. From this, we have
\begin{equation}
    z_i^{(m)} \in H_{j}^{(l)} \forall 1 \le i, j \le n, 1 \le l < m \le M.
\end{equation}
Also recall Proposition~\ref{prop:z_i_belongs_2}, i.e. $z_i^{(m)} \in H_j^{(m)} \forall i \neq j, 1 \le i, j \le n, 1 \le m \le M$.

Since $K_p = K_{p - 1} \cap H_p$, and $z_i^{(m)} \in H_j^{(l)}$ if either $l < m$ or $j < i$, the following proposition of feasibility follows.
\begin{proposition}
\begin{equation}
    z_p \in K_{p - 1}.
\end{equation}
\end{proposition}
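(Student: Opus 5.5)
We need to show $z_p \in K_{p-1}$ for every $1 \le p \le P$, where $p=(m-1)n+i$. Since $K_{p-1}=K_0\cap H_1\cap\cdots\cap H_{p-1}$, membership in $K_{p-1}$ is the same as membership in $K_0$ together with membership in each halfspace $H_q$ for $q<p$. So the plan is to unfold the recursion and then check each halfspace separately, splitting the indices $q<p$ according to the layer they belong to.

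First, $z_p=Dr_m u_i^{(m)}$ has norm $Dr_m\le Dr_1$, so $z_p\in K_0=\mathbb B(0,Dr_1)$.

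Next, take any $q<p$ and write $q=(l-1)n+j$ with $1\le l\le M$ and $1\le j\le n$. Because $p$ and $q$ are both written in the base-$n$ indexing $(m-1)n+i$, the condition $q<p$ means either $l<m$, or $l=m$ and $j<i$. We handle the two cases as follows.

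\emph{Case $l<m$.} The norm of $z_p$ is $Dr_m\le Dr_{l+1}$, since $r$ is decreasing and $m\ge l+1$. Hence $z_p\in\mathbb B(0,Dr_{l+1})\subset H_j^{(l)}$ by Proposition~\ref{prop:z_i_belongs_1}. One can also see this directly: $z_p^T u_j^{(l)}\le \|z_p\|\le Dr_{l+1}$ by Cauchy--Schwarz.

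\emph{Case $l=m$, $j<i$.} Here $j\ne i$, so Proposition~\ref{prop:z_i_belongs_2} gives $z_i^{(m)}\in H_j^{(m)}$. Its underlying reason is that $(u_i^{(m)})^Tu_j^{(m)}=\cos\theta_{\mathcal S}\le\cos\rho\le\cos\vartheta_m=r_{m+1}/r_m$, using $\rho$-separation and $\vartheta_m\le\rho\le\pi/2$. Multiplying by $Dr_m$ gives $z_p^T u_j^{(m)}\le Dr_{m+1}$.

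Combining the two cases, $z_p$ lies in every $H_q$ with $q<p$, and together with $z_p\in K_0$ this gives $z_p\in K_{p-1}$.

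There is no real obstacle here. The only point needing care is the translation of $q<p$ into "$l<m$, or $l=m$ and $j<i$", which relies on $1\le i,j\le n$. The monotonicity of $\cos$ on $[0,\pi]$ used in the second case is already covered by the earlier propositions.
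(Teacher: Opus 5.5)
Your proof is correct and follows the paper's argument: you unfold $K_{p-1}$ into $K_0$ and the halfspaces $H_j^{(l)}$ with $l<m$ or ($l=m$, $j<i$), then handle earlier layers via Proposition~\ref{prop:z_i_belongs_1} (radii are monotone) and the same layer via Proposition~\ref{prop:z_i_belongs_2}. You also spell out the check $z_p\in K_0$, the index bookkeeping, and the inner-product justification, all of which the paper leaves implicit.
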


Once the set $K_p$ is shown in phase $p = (m - 1) n + i$ however, $z_p \not \in K_{p}$ and a constraint violation incurred, as below.
\begin{equation}
\dist\left(z_i^{(m)},K_{p}\right) \ge \dist\left(z_i^{(m)},H_i^{(m)}\right) = \frac{D}{2M} = \frac{D}{2} T^{-1/d}.
\label{eqn:one_step_ccv}
\end{equation}

Thus phase $p$ has a vector $z_p$ that is feasible immediately before the constraint is imposed (i.e. $z_p \in K_{p - 1}$) and violates the new constraint by order \(T^{-1/d}\) immediately after the constraint is imposed (i.e. since $z_p \not \in K_p$).

The number of distinct phases is $
P = Mn$. Recall that
\begin{equation}
\Delta=\left\lfloor \frac{T}{P}\right\rfloor = \Theta \left(T^{\frac{d-1}{2d}}\right),
\end{equation}
such that within the $p$th phase, the constraint set $K_p$ is repeated for $\Delta$ time-steps. More formally, for $t=(p-1)\Delta+1,\dots,p\Delta$, we take
\begin{equation}
S_t=K_p.
\end{equation}
If \(P\Delta<T\), the remaining time-steps are filled by repeating \(K_P\).

The sequence begins as
\begin{align}
S_1=\cdots=S_\Delta
&=
K_1
=
K_0\cap H_1^{(1)},\\
S_{\Delta+1}=\cdots=S_{2\Delta}
&=
K_2
=
K_1\cap H_2^{(1)},\\
&\vdots \nonumber
\end{align}
and continues until all \(Mn\) halfspaces have been introduced.

\paragraph{Cost functions.} We now define the cost functions for the COCO instance. The cost functions are such that $\AA$ will play actions which move from $z_p$ to $z_{p+1}$ within phase $p$. Essentially, the cost functions are oriented such that the actions played by $\AA$ follows the trajectory along $z_1, z_2, \dots, z_P$.

\begin{figure}[t]
\centering
\begin{tikzpicture}[scale=1.0,>=Latex]

    % Basic parameters
    \def\Rone{4.0}
    \def\Rtwo{3.0}
    \def\angone{0}
    \def\angtwo{60}

    % Circles
    \draw[thick] (0,0) circle (\Rone);
    \draw[thick] (0,0) circle (\Rtwo);

    \fill (0,0) circle (1.2pt);
    \node[below left] at (0,0) {$0$};

    \node[font=\small] at (135:\Rone+0.35) {$C_1$};
    \node[font=\small] at (145:\Rtwo+0.35) {$C_2$};

    % Main points
    \coordinate (z1) at (\angone:\Rone);
    \coordinate (q1) at (\angone:\Rtwo);
    \coordinate (z2) at (\angtwo:\Rone);

    % Intermediate iterates
    \coordinate (x3) at ($(q1)!0.22!(z2)$);
    \coordinate (x4) at ($(q1)!0.48!(z2)$);
    \coordinate (x5) at ($(q1)!0.74!(z2)$);

    % Reference rays
    \draw[gray!50,dashed] (0,0) -- (z1);
    \draw[gray!50,dashed] (0,0) -- (z2);

    % Tangent line at q_1
    \draw[orange!85!black, thick] (\Rtwo,-2.8) -- (\Rtwo,2.8);
    \node[font=\scriptsize, text=orange!85!black]
        at (\Rtwo+0.65,2.45)
        {$\partial H_1$};

    % Projection step z1 -> q1
    \draw[orange!85!black, very thick, ->] (z1) -- (q1);
    \node[font=\scriptsize, text=orange!85!black, anchor=east]
        at ($(q1)+(-0.18,-0.28)$)
        {$x_2=\PP_{K_1}(z_1)$};

    % Iteration arrows
    \draw[blue!70!black, thick, ->] (q1) -- (x3);
    \draw[blue!70!black, thick, ->] (x3) -- (x4);
    \draw[blue!70!black, thick, ->] (x4) -- (x5);
    \draw[blue!70!black, thick, ->] (x5) -- (z2);

    % Points
    \fill[blue] (z1) circle (2.5pt);
    \fill[orange!90!black] (q1) circle (2.5pt);
    \fill[blue] (z2) circle (2.5pt);
    \fill[red] (x3) circle (2.1pt);
    \fill[red] (x4) circle (2.1pt);
    \fill[red] (x5) circle (2.1pt);

    % Labels
    \node[font=\small, text=blue!70!black] at (4.7,0.0)
        {$x_1=z_1$};

    \node[font=\small, text=red!80!black, anchor=east]
        at ($(x3)+(0.25,-0.4)$)
        {$x_3$};

    \node[font=\small, text=red!80!black, anchor=west]
        at ($(x4)+(0.05,0.01)$)
        {$x_4$};

    \node[font=\small, text=red!80!black, anchor=west]
        at ($(x5)+(0.05,0.01)$)
        {$x_5$};

    \node[font=\small, text=blue!70!black]
        at ($(z2)+(0.55,0.25)$)
        {$x_6=z_2$};

    \node[font=\scriptsize]
        at ($(z2)+(-0.28,0.35)$)
        {$z_2$};
\end{tikzpicture}
\caption{Actions played during the first phase from \(z_1\) to \(z_2\).}
\label{fig:first-phase-motion}
\end{figure}

We will assume that the initial action satisfies $x_1 = z_1$.

For a phase $p=(m-1)n+i$, define the time-instant $\tau_p = (p-1)\Delta+1$ as the phase head. Thus, at time-instant $ \tau_p$, the constraint set $S_{\tau_p} = K_{p}$ is shown and a constraint violation will be incurred by $\AA$ as in~\eqref{eqn:one_step_ccv}.

Recall that
\begin{equation}
z_p=z_i^{(m)}=Dr_m u_i^{(m)}.    
\end{equation}
Let
\begin{equation}
q_p = p_i^{(m)} = D r_{m+1}u_i^{(m)},
\label{eqn:q_p_definition}
\end{equation}
which is the projection of \(z_p\) onto the tangent halfspace \(H_i^{(m)}\).

Using~\eqref{eqn:one_step_ccv}, note that the projection distance (i.e. the distance between $z_p$ and $q_p$) satisfies
\begin{equation}
    \|z_p - \PP_{K_p} (z_p) \| = \|z_p - q_p\| \ge \frac D2 T^{-\frac 1d}.
\label{eqn:per_step_projection_distance}
\end{equation}

For each phase $p$, set the cost function during the phase-head, $f_{\tau_p}(x)=0$. It is left to define the remaining cost functions within each of the phases.

Now, assume inductively that $x_{\tau_p}=z_p$. The first update in phase \(p\) is therefore $y_{\tau_p}=x_{\tau_p}=z_p$ and hence the proposition below follows. 
\begin{proposition}
    Assuming $x_{\tau_p} = z_p$, $x_{\tau_p+1} = \PP_{K_p}(z_p) = q_p$, as defined in~\eqref{eqn:q_p_definition}.
\end{proposition}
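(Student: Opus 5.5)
Since $f_{\tau_p}\equiv 0$, its gradient vanishes, so the OGD step gives $y_{\tau_p} = x_{\tau_p} - \eta_{\tau_p}\nabla f_{\tau_p} = z_p$. By the update rule, $x_{\tau_p+1} = \PP_{S_{\tau_p}}(y_{\tau_p}) = \PP_{K_p}(z_p)$, because $S_{\tau_p}=K_p$ by construction. It remains to show that $\PP_{K_p}(z_p) = q_p = Dr_{m+1}u_i^{(m)}$, where $p=(m-1)n+i$.

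The plan is a sandwich argument. Since $K_p \subseteq H_p = H_i^{(m)}$, we have
\[
\dist(z_p, K_p) \ge \dist(z_p, H_i^{(m)}) = \|z_p - q_p\|,
\]
using \eqref{eqn:projection_cost} and the fact that $q_p$ is the unique projection of $z_p$ onto the halfspace. Indeed, $z_p - q_p = D(r_m-r_{m+1})u_i^{(m)}$ is parallel to the unit normal $u_i^{(m)}$, and $q_p^T u_i^{(m)} = Dr_{m+1}$, so $q_p$ lies on $\partial H_i^{(m)}$. So if we show $q_p \in K_p$, then $q_p$ attains the minimal distance from $z_p$ to $K_p$. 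Uniqueness of the projection onto the closed convex set $K_p$ then gives $\PP_{K_p}(z_p)=q_p$.

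The main step is showing $q_p \in K_p = K_0 \cap \bigcap_{p'\le p} H_{p'}$. Since $\|q_p\| = Dr_{m+1}$, we have $q_p \in \BB(0,Dr_{m+1})$. This ball lies in $K_0$ because $r_{m+1}\le r_1$. It lies in every $H_j^{(m)}$ by Proposition~\ref{prop:z_i_belongs_1}, and hence in all the same-layer halfspaces with $j\le i$, including $H_i^{(m)}$ itself. For earlier layers $l<m$, we have $\BB(0,Dr_{m+1})\subseteq \BB(0,Dr_{l+1})\subset H_j^{(l)}$, again by Proposition~\ref{prop:z_i_belongs_1} together with $r_{m+1}\le r_{l+1}$. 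Together these give $q_p\in K_p$.

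I expect no real obstacle. The only point needing care is the nesting of radii, $r_{m+1}\le r_{l+1}$ for $l\le m$. This holds because $r_m$ is decreasing in $m$ by \eqref{eqn:r_m_definition}. With it, the sandwich argument completes the proof.
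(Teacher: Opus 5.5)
Your proof is correct and takes the same route as the paper: $f_{\tau_p}\equiv 0$ gives $y_{\tau_p}=z_p$, and hence $x_{\tau_p+1}=\PP_{K_p}(z_p)$. The paper only asserts $\PP_{K_p}(z_p)=q_p$, by calling $q_p$ the projection onto $H_i^{(m)}$. Your sandwich argument supplies the missing verification: the inclusion $K_p\subseteq H_i^{(m)}$ gives the lower bound on the distance, and $q_p\in\BB(0,Dr_{m+1})\subseteq K_p$ shows that bound is attained.
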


Now suppose \(p<P\). We want the cost functions to be such that $\AA$ moves from \(q_p\) to \(z_{p+1}\) during the remaining \(\Delta-1\) slots of phase \(p\). Define
\begin{equation}
\ell_p=\|z_{p+1}-q_p\|.    
\end{equation}
If $\ell_p = 0$, set all the cost functions in that phase to be $0$. Otherwise, define
\begin{equation}
a_p=\frac{z_{p+1}-q_p}{\|z_{p+1}-q_p\|}.
\end{equation}
Within phase $p$, the unit vector \(a_p\) points from the projection vector \(q_p\) toward the next vector \(z_{p+1}\).

Since \(q_p, z_{p+1} \in K_p\) and \(K_p\) is convex, the following proposition holds for the line segment from $q_p$ to $z_{p+1}$, denoted by $[q_p, z_{p+1}]$.
\begin{proposition}
\begin{equation}
[q_p,z_{p+1}] \subseteq K_p.
\end{equation}  
\label{prop:line_segment_belongs}
\end{proposition}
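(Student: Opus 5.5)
The plan is to reduce Proposition~\ref{prop:line_segment_belongs} to two membership facts, $q_p \in K_p$ and $z_{p+1} \in K_p$, and then invoke convexity of $K_p$. Convexity holds because $K_p$ is the intersection of the ball $K_0=\BB(0,Dr_1)$ with the finitely many halfspaces $H_1,\dots,H_p$. Once both endpoints lie in $K_p$, every point $\lambda q_p + (1-\lambda) z_{p+1}$ with $\lambda\in[0,1]$ lies in each of these convex sets, and hence in $K_p$.

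For $z_{p+1}\in K_p$, I would directly apply the feasibility proposition $z_{p'}\in K_{p'-1}$ with $p'=p+1$. This is legitimate since $p<P$ in the phase under consideration.

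For $q_p \in K_p$, write $p=(m-1)n+i$, so that $q_p = Dr_{m+1}u_i^{(m)}$. I would check each constituent set of $K_p=K_0\cap H_1\cap\cdots\cap H_p$ separately.
\begin{itemize}
\item Membership in $K_0$ follows from $\|q_p\| = Dr_{m+1}\le Dr_1$.
\item For a halfspace $H_j^{(l)}$ with $l\le m$, note that $r_{m+1}\le r_{l+1}$. Hence $q_p\in \BB(0,Dr_{m+1})\subseteq \BB(0,Dr_{l+1})\subset H_j^{(l)}$ by Proposition~\ref{prop:z_i_belongs_1}.
\end{itemize}
This second case covers every halfspace introduced up to phase $p$, including $H_p=H_i^{(m)}$ itself, since all indices $(l,j)$ with $(l-1)n+j\le p$ satisfy $l\le m$.

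The only point needing care is the equality case $l=m$. There $q_p$ lies on the sphere $C_{m+1}$, so I must use that Proposition~\ref{prop:z_i_belongs_1} refers to the closed ball and that $H_i^{(m)}$ is closed. Concretely, $q_p^T u_i^{(m)} = Dr_{m+1}$ meets the defining inequality with equality. This boundary case is the main obstacle, although it is mild. Everything else is direct containment of nested balls.
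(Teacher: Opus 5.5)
Your proposal is correct and follows the same route as the paper, which simply observes that $q_p, z_{p+1} \in K_p$ and invokes the convexity of $K_p$. You also spell out why these two memberships hold, which the paper leaves implicit: the feasibility proposition gives $z_{p+1}\in K_p$, and the nested-ball bound $\|q_p\| = Dr_{m+1}\le Dr_{l+1}$, applied to every halfspace $H_j^{(l)}$ with $l\le m$, gives $q_p\in K_p$.
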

Therefore, as long as the gradient directions are along this segment, the projection onto \(K_p\) does not affect the trajectory.

Recall that $G$ is the Lipschitz constant of the cost and constraint functions. We now choose nonnegative scalars $\gamma_t\in[0,G]$ for $t=\tau_p+1,\dots,\tau_p + \Delta - 1$ such that
\begin{equation}
\sum_{t=\tau_p+1}^{p\Delta}\eta_t\gamma_t=\ell_p.    
\label{eqn:gamma_requirement}
\end{equation}

We will argue that scalars $\gamma_t$ exist satisfying~\eqref{eqn:gamma_requirement}. From Proposition~\ref{prop:small_z_distance}, we have
\begin{equation}
    \ell_p \overset{(a)}{\le} \left\|z_{p+1}-z_p\right\| + \left\|q_p -z_p\right\| \overset{(b)}{\le} D A_d\rho + \frac{D}{2M} \overset{(c)}{\le} D B_d T^{-1/(2d)},
    \label{eqn:equality_upper_bound}
\end{equation}
where $(a)$ follows from the triangle inequality, $(b)$ uses Proposition~\ref{prop:small_z_distance} and~\eqref{eqn:per_step_projection_distance}, and $(c)$ uses $\frac{1}{2M} = \frac{\rho^2}{6}$ from~\eqref{eqn:rho_definition} and $A_d \rho + \frac{\rho^2}{6} \le B_d \rho$ for some $B_d \ge 1$. Next, note that since $\gamma_t$'s are real numbers between $[0, G]$, the maximum value which $\sum_{t=\tau_p+1}^{p\Delta}\eta_t \gamma_t$ can take is:
\begin{equation}
o_T = \sum_{t=\tau_p+1}^{p\Delta}\eta_tG = \sum_{t=\tau_p+1}^{p\Delta}\frac{2 D}{\sqrt t} \ge \frac{2 (\Delta-1)D}{\sqrt T} = \Omega\left(T^{-1/(2d)}\right). 
\label{eqn:equality_lower_bound}
\end{equation}
Thus, we have shown that $\sum_{t=\tau_p+1}^{p\Delta}\eta_t \gamma_t$ can take values in $[0, o_T]$, where $o_T = \Omega\left(T^{-1/(2d)}\right)$.

Equation~\eqref{eqn:equality_upper_bound} shows that $\ell_p$ is also $O(T^{-1/(2d)})$ while~\eqref{eqn:equality_lower_bound} shows that the scalars $\gamma_t$ can be chosen to have any value which is $O(T^{-1/(2d)})$. Thus, the value of $\tilde{c}_d$ from the definition of $n$ in~\eqref{eqn:n_definition} can be chosen to be sufficiently small such that these scalars $\gamma_t$ exist.

We already defined the cost functions during the phase-heads, i.e. the $\tau_p$'s. For the remaining time steps in phase \(p\), define the linear losses
\begin{equation}
f_t(x)=-\gamma_t a_p^\top x,
\qquad
t=\tau_p+1,\dots,p\Delta.    
\end{equation}
Each \(f_t\) is convex and \(G\)-Lipschitz because
\begin{equation}
\|\nabla f_t\|=\gamma_t\le G
\end{equation}
and $|a_t| = 1$.

The corresponding actions played by algorithm $\AA$ are
\begin{equation}
y_t = x_t-\eta_t\nabla f_t(x_t) = x_t+\eta_t\gamma_t a_p.    
\end{equation}

Within phase $p$, starting from \(x_{\tau_p+1}=q_p\), the actions played by $\AA$ are along the line segment $[q_p, z_{p+1}]$ from \(q_p\) to \(z_{p+1}\). Using Proposition~\ref{prop:line_segment_belongs}, this entire segment is contained in \(K_p\), and thus the projection step keeps $y_t$ the same as $x_{t + 1}$, as below:
\begin{equation}
x_{t+1}=\PP_{K_p}(y_t)=y_t.
\end{equation}
By the choice of the coefficients \(\gamma_t\),
\begin{equation}
x_{p\Delta+1}
=
q_p+
\left(
\sum_{t=\tau_p+1}^{p\Delta}\eta_t\gamma_t
\right)a_p
=
q_p+\ell_p a_p
=
z_{p+1}.
\end{equation}
The movement from $z_1$ to $z_2$ in Phase 1 for $d = 2$ is depicted schematically in Fig.~\ref{fig:first-phase-motion}.

For the final phase $p = P$ and time-slots after that, there is no `next' vector; we may set all remaining losses to zero. Thus, set $f_t \equiv 0$ for $t = \tau_p + 1, \dots, T$.

Therefore the following invariant holds between phases:
\begin{equation}
x_{\tau_p}=z_p
\quad\Longrightarrow\quad
x_{p\Delta+1}=z_{p+1}
\qquad
\text{for every }p<P.
\end{equation}

This completes the construction. Finally, we are ready to complete the proof of Theorem \ref{thm:main}.

\subsection{Completing the Proof of Theorem \ref{thm:main}}
\begin{proof} First, note from~\eqref{eqn:per_step_projection_distance} that the projection distance incurred at the phase-heads $\tau_p$ for $p = 1, \dots, P$ is
\begin{equation}
    \|x_{\tau_p} - \PP_{S_{\tau_p}} (x_{\tau_p})\| = \|z_p - q_p \| = \frac{D}{2} T^{-1/d}
    \label{eqn:proj_cost_bound}
\end{equation}
and that there are a total of $P = Mn = \tilde{c}_d T^{1/d + (d-1)/(2d)} = \tilde{c}_d T^{(d+1)/(2d)}$ such phases.

Recall that we have defined only the constraint sets (i.e. $S_t$'s) only and are yet to define the constraint functions themselves. We take the constraint functions to be
\begin{equation}
    g_t(x) = G \cdot \|x - \PP_{S_t}(x)\|,
    \label{eqn:g_t_definition}
\end{equation} which is essentially $G$ times the projection distance from the set $S_t$. These constraint functions (i.e. $g_t$'s) are convex and $G$--Lipschitz.

From this, 
\begin{equation}
    \sum_{t = 1}^T [g_t(x_t)]_+ \overset{(a)}{\ge} \sum_{p = 1}^P g_{\tau_p}(x_{\tau_p}) \overset{(b)}{=} \sum_{p = 1}^P G \|x_{\tau_p} - \PP_{S_{\tau_p}} (x_{\tau_p})\| \overset{(c)}{\ge} \tilde{c}_d \frac{G D}{2} T^{\frac{d-1}{2d}},
\end{equation}
where $(a)$ follows from removing the constraint violations of the terms which are not phase-heads (i.e. we retain the constraint violation terms only when $t = \tau_p$ for some $1 \le p \le P$), $(b)$ uses the definition of $g_t$'s from~\eqref{eqn:g_t_definition}, and $(c)$ uses~\eqref{eqn:proj_cost_bound}. \end{proof}

\section{Proof of Theorem~\ref{thm:unit_vector_theorem}}

\label{sec:unit_vector_proof}

For the inductive construction it is useful to demand two additional properties. First, we maintain the ordered collection to be cyclic, such that $\theta_{\mathcal S}\left(u_{N_d(\rho)},u_1\right)\le A_d\rho$. This additional cyclic local property is what allows the construction to return to the same place where it started before passing to the next higher dimension. Second, we will also maintain the property that $N_d(\rho)$ is even.

We prove, by induction on the number of dimensions \(k\), the following strengthened statement:
for every \(k\ge 2\), there exist constants \(A_k\ge 1\) and \(c_k>0\)
such that for every \(\rho\in(0,\pi/2]\), the constructed collection
\(\mathcal U_k(\rho)\) satisfies the three properties in the theorem,
the cyclic local transition bound
\begin{equation}
\theta_{\mathcal S}(u_{N_k(\rho)},u_1)\le A_k\rho,
\end{equation}
and the parity property that \(N_k(\rho)\) is even.

\subsection{Construction of the unit vectors}

First, fix any $\rho \in (0, \frac \pi 2]$.

\noindent \textbf{Base case \(k=2\).}

Let $N_2(\rho)=2\left\lfloor \frac{\pi}{\rho}\right\rfloor$ and $\gamma(\rho)=\frac{2\pi}{N_2(\rho)}$. Take the orthonormal basis \((e_1,e_2)\) of \(\mathbb R^2\), where \(e_1=(1,0)\) and \(e_2=(0,1)\). Define
\begin{equation}
\mathcal U_2(\rho)=\left(u_1,\dots,u_{N_2(\rho)}\right),
\end{equation}
where
\begin{equation}
u_j=\cos((j-1)\gamma(\rho))e_1+\sin((j-1)\gamma(\rho))e_2,
\qquad
j=1,\dots,N_2(\rho).
\end{equation}

Since \(\rho\le \pi/2\), we have \(\pi/\rho\ge 2\). Therefore
\begin{equation}
\label{eqn:gamma-bound-new}
\rho\le \gamma(\rho)\le 2\rho.
\end{equation}

Note that $N_2(\rho)$ is even, by construction. The construction is depicted in Fig.~\ref{fig:unit-vectors-s1}.

\begin{figure}[t]
\centering
\begin{tikzpicture}[scale=1.1,>=Latex]
    \def\R{3.0}
    \def\N{6}

    % Circle
    \draw[thick] (0,0) circle (\R);
    \fill (0,0) circle (1.2pt);
    \node[below left] at (0,0) {$0$};

    % Points u_i
    \foreach \i in {1,...,6}{
        \pgfmathsetmacro{\ang}{(\i-1)*360/\N}
        \coordinate (u\i) at (\ang:\R);
        \draw[gray!60,dashed] (0,0) -- (u\i);
        \fill[blue] (u\i) circle (2.2pt);
        \node[font=\small] at (\ang:\R+0.45) {$u_{\i}$};
    }

    % Adjacent angular separation
    \draw[red!80!black, thick, <->] (0:1.15)
        arc[start angle=0, end angle=60, radius=1.15];
    \node[red!80!black, font=\small, anchor=west]
        at ($(30:0.8)+(0.35,0)$)
        {$\gamma = \frac{2\pi}{6}\ge \rho$};
\end{tikzpicture}
\caption{Base construction on $\mathcal S^1$ with six equally spaced directions for $d = 2$.}
\label{fig:unit-vectors-s1}
\end{figure}

\noindent \textbf{Recursive step.} Fix \(k\ge 3\). Assume recursively that the construction has already been carried out in dimension \(k-1\), and that for every \(\rho \in(0,\pi/2]\), the ordered collection
\begin{equation}
\mathcal U_{k-1}(\rho) = \left(v_1,\dots,v_{N_{k-1}(\rho)}\right) \subset \mathcal S^{k-1}
\end{equation}
satisfies the properties in the statement of the theorem.

Let \(e_1,\dots,e_k\) denote the standard basis unit vectors of \(\mathbb R^k\), i.e. the vector $e_i \in \RR^k$ is a unit vector which lies along the $i$th coordinate axis. For \(z=(z_1,\dots,z_{k-1})\in\mathcal S^{k-1}\subset\mathbb R^{k-1}\), define the canonical embedding into \(\mathbb R^k\) by
\begin{equation}
\Lambda_k(z)=(z_1,\dots,z_{k-1},0)\in\mathbb R^k.
\end{equation}

Next, we note the following propositions which follow from the definition of $\Lambda_k$.
\begin{proposition}
\begin{equation}
\Lambda_k(z)\in\mathcal S^{k} \forall z \in \mathcal S^{k-1}.
\end{equation}    
\end{proposition}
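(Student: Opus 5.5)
The claim is that $\Lambda_k(z)\in\mathcal S^{k}$ for every $z\in\mathcal S^{k-1}$. The plan is to verify this straight from the definitions of $\Lambda_k$ and $\mathcal S^k$. Since $\mathcal S^{k}=\{u\in\mathbb R^k:\|u\|=1\}$, two things must be checked for an arbitrary $z=(z_1,\dots,z_{k-1})\in\mathcal S^{k-1}$. First, $\Lambda_k(z)$ is a vector in $\mathbb R^k$. Second, its Euclidean norm equals one.

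The first point is immediate from the definition $\Lambda_k(z)=(z_1,\dots,z_{k-1},0)$: this is a $k$-tuple of real numbers. For the second, compute the squared norm coordinatewise. Since the appended $k$th coordinate is zero, we get
\begin{equation*}
\|\Lambda_k(z)\|^2=\sum_{j=1}^{k-1} z_j^2+0^2=\|z\|^2 .
\end{equation*}
The hypothesis $z\in\mathcal S^{k-1}$ gives $\|z\|=1$. Hence $\|\Lambda_k(z)\|=1$, so $\Lambda_k(z)\in\mathcal S^k$.

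There is no genuine obstacle; the only step needing attention is noticing that the extra coordinate contributes nothing to the norm. It is worth recording the slightly stronger fact that $\Lambda_k$ is linear and preserves inner products: $\Lambda_k(z)^T\Lambda_k(w)=z^Tw$ for all $z,w\in\mathbb R^{k-1}$, by the same coordinatewise computation. Consequently $\theta_{\mathcal S}(\Lambda_k(z),\Lambda_k(w))=\theta_{\mathcal S}(z,w)$. This isometry property is what the recursive step will actually use: it transports the separation property (item 1 of Theorem~\ref{thm:unit_vector_theorem}), the local transition bound (item 2), and the cyclic bound of the $(k-1)$-dimensional collection into $\mathcal S^k$ unchanged.
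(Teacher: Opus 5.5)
Your argument is correct and matches the paper, which states this proposition as following directly from the definition of $\Lambda_k$; your computation $\|\Lambda_k(z)\|^2=\|z\|^2=1$ is exactly that verification. One minor remark on your aside: the recursive step uses inner-product preservation \emph{together with} the orthogonality $\Lambda_k(z)^T e_k=0$ (the paper's next proposition) to expand $u_{\ell,j}^T u_{\ell',j'}$, so the lower-dimensional properties are not carried over unchanged. For example, the $2\rho$-separation of the $v_j$ yields only $\rho$-separation of the $u_{\ell,j}$, via $\sin^2\alpha_\ell\ge \tfrac12$.
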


\begin{proposition}
\begin{equation}
\Lambda_k(z)^T e_k=0 \forall z \in \mathcal S^{k-1}.
\end{equation}    
\end{proposition}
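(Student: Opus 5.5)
The statement immediately above is the last of the two elementary propositions about the canonical embedding $\Lambda_k$: for every $z\in\mathcal S^{k-1}$ we have $\Lambda_k(z)^T e_k=0$. I would prove it by direct computation from the definition, with no appeal to the inductive hypothesis.

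First, recall that $\Lambda_k(z)=(z_1,\dots,z_{k-1},0)$ and that $e_k=(0,\dots,0,1)\in\mathbb R^k$ is the $k$th standard basis vector. Next, expand the inner product coordinatewise:
\begin{equation}
\Lambda_k(z)^T e_k=\sum_{j=1}^{k-1} z_j\cdot 0 \;+\; 0\cdot 1 .
\end{equation}
The first $k-1$ coordinates of $e_k$ vanish, and the $k$th coordinate of $\Lambda_k(z)$ is zero by construction, so every summand is $0$ and the sum is $0$.

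The hypothesis $z\in\mathcal S^{k-1}$ is not needed. The identity holds for every $z\in\mathbb R^{k-1}$; the unit-norm assumption matters only for the companion proposition $\Lambda_k(z)\in\mathcal S^k$, where one uses $\|\Lambda_k(z)\|^2=\sum_{j<k}z_j^2+0=\|z\|^2=1$. I would remark on this so that later steps of the recursive construction can use the orthogonality freely, for example to say that vectors of the form $\cos\phi\,\Lambda_k(v)+\sin\phi\,e_k$ are unit vectors.

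No step here is an obstacle. The only thing to check is that the embedding appends its zero in the same coordinate singled out by $e_k$, and with the paper's definitions it does.
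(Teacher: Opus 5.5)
Your proposal is correct and matches the paper, which simply notes that this follows from the definition of $\Lambda_k$: the last coordinate of $\Lambda_k(z)$ is zero while $e_k$ is supported only on that coordinate, so the inner product vanishes. Your remark that the unit-norm hypothesis on $z$ is not needed here is also correct.
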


We next split the construction into two regimes.

\noindent\textbf{Small-scale regime \(0<\rho\le \pi/4\).}

\noindent \textit{Intuition for the induction from $k = 2$ to $k = 3$.} Assume that the two-dimensional construction is available. In three-dimensions, we consider $N_2(\eta)$ many longitudes, where $\eta = 2 \rho$. These longitudes are determined by the equatorial vector, which is obtained from the lower-dimensional construction. Once the longitudes are defined, we populate each of these longitudes with vectors with polar angles $\frac{\pi}{4} \le \alpha_l \le \frac{3 \pi}{4}$ and separation $\rho$. In other words, we populate each longitude with $L(\rho)$ vectors (defined later) from the latitude corresponding to a polar angle $\frac{\pi}{4}$ to the latitude corresponding to a polar angle $\frac{3 \pi}{4}$. To obtain the ordering, we traverse the odd longitudes top-to-bottom and the even longitudes bottom-to-top, to ensure that consecutive vectors are not too far apart. This is depicted schematically in Fig.~\ref{fig:induction-k2-to-k3}.

\begin{figure}[t]
\centering
\begin{tikzpicture}[
    scale=1.2,
    x={(1cm,0cm)},
    y={(0.48cm,0.26cm)},
    z={(0cm,1cm)},
    >=Latex,
    line cap=round,
    line join=round
]

% Illustrative parameters only
\pgfmathsetmacro{\R}{2.35}
\pgfmathsetmacro{\M}{8}
\pgfmathsetmacro{\L}{5}

% Three visible/front longitudes, starting from the middle and going rightwards
\def\shownlongitudes{7,8,1}

% Polar band alpha in [pi/4, 3pi/4]
\pgfmathsetmacro{\alphaTop}{45}
\pgfmathsetmacro{\alphaBot}{135}
\pgfmathsetmacro{\rTop}{\R*sin(\alphaTop)}
\pgfmathsetmacro{\zTop}{\R*cos(\alphaTop)}
\pgfmathsetmacro{\rBot}{\R*sin(\alphaBot)}
\pgfmathsetmacro{\zBot}{\R*cos(\alphaBot)}

\coordinate (O) at (0,0,0);
\coordinate (N) at (0,0,\R);
\coordinate (S) at (0,0,-\R);

% Axis of the sphere
\draw[gray!70,dashed] (S) -- (N);
\draw[-Latex,gray!80] (O) -- (0,0,1.18*\R);

% Sphere silhouette
\begin{scope}[canvas is xz plane at y=0]
  \draw[gray!65] (0,0) circle[radius=\R];
\end{scope}

% Visible front half of equator
\draw[gray!55,densely dashed]
  plot[domain=180:360,samples=80,variable=\phi]
  ({\R*cos(\phi)},{\R*sin(\phi)},0);

% Visible front halves of boundary latitudes
\draw[gray!45,densely dotted]
  plot[domain=180:360,samples=80,variable=\phi]
  ({\rTop*cos(\phi)},{\rTop*sin(\phi)},\zTop);

\draw[gray!45,densely dotted]
  plot[domain=180:360,samples=80,variable=\phi]
  ({\rBot*cos(\phi)},{\rBot*sin(\phi)},\zBot);

% Shown longitudes and points
\foreach \j in \shownlongitudes {
  \pgfmathsetmacro{\phi}{(\j-1)*360/\M}

  % Curved longitude segment in the populated band
  \draw[blue!45,thick]
    plot[domain=45:135,samples=50,variable=\a]
    ({\R*sin(\a)*cos(\phi)},
     {\R*sin(\a)*sin(\phi)},
     {\R*cos(\a)});

  % Embedded lower-dimensional equatorial vector
  \coordinate (v-\j) at ({\R*cos(\phi)},{\R*sin(\phi)},0);
  \draw[gray!45,dashed] (O) -- (v-\j);
  \fill[black] (v-\j) circle[radius=1.1pt];

  % Points u_{\ell,j}
  \foreach \ell in {1,...,5} {
    \pgfmathsetmacro{\alpha}{45+(\ell-1)*90/(\L-1)}
    \coordinate (u-\ell-\j)
      at ({\R*sin(\alpha)*cos(\phi)},
          {\R*sin(\alpha)*sin(\phi)},
          {\R*cos(\alpha)});
    \fill[blue!75!black] (u-\ell-\j) circle[radius=1.35pt];
  }
}

% Curved transition arrows between visible longitudes
\draw[-Latex,orange!85!black,thick]
  plot[domain=270:315,samples=40,variable=\phi]
  ({\rBot*cos(\phi)},{\rBot*sin(\phi)},\zBot);

\draw[-Latex,orange!85!black,thick]
  plot[domain=315:360,samples=40,variable=\phi]
  ({\rTop*cos(\phi)},{\rTop*sin(\phi)},\zTop);

% Labels for axis
\node[anchor=south] at (0,0,1.18*\R) {$e_3$};
\node[anchor=north] at (S) {$-e_3$};

% Labels for the three longitudes
\node[blue!65!black,anchor=east] at (u-3-7) {$\mathcal M_1$};
\node[blue!65!black,anchor=south west] at (u-3-8) {$\mathcal M_2$};
\node[blue!65!black,anchor=west] at (u-3-1) {$\mathcal M_3$};

% Top and bottom point labels on each longitude
\node[anchor=east] at (u-1-7) {$u_{1,1}$};
\node[anchor=east] at (u-5-7) {$u_{L(\rho),1}$};

\node[anchor=south west] at (u-1-8) {$u_{1,2}$};
\node[anchor=north west] at (u-5-8) {$u_{L(\rho),2}$};

\node[anchor=west] at (u-1-1) {$u_{1,3}$};
\node[anchor=west] at (u-5-1) {$u_{L(\rho),3}$};

\end{tikzpicture}
\caption{Induction from \(k=2\) to \(k=3\). The embedded equatorial vectors
\(\Lambda_3(v_1),\Lambda_3(v_2),\Lambda_3(v_3)\) determine longitudes
\(\mathcal M_1,\mathcal M_2,\mathcal M_3\), which are populated by points
\(u_{\ell,1},u_{\ell,2},u_{\ell,3}\) with polar angles
\(\pi/4\le \alpha_\ell\le 3\pi/4\).}
\label{fig:induction-k2-to-k3}
\end{figure}

\noindent \textit{Formal construction for general $k$.} For $k \ge 3$, set $\eta=2\rho$. Since \(\rho\le \pi/4\), we have \(\eta\in(0,\pi/2]\), so the lower-dimensional construction is available. Write
\begin{equation}
\mathcal U_{k-1}(\eta)
= \mathcal U_{k-1}(2 \rho) =
\left(v_1,\dots,v_M\right),
\qquad
M=N_{k-1}(\eta).
\label{eqn:lower_dimensional_vectors_defn}
\end{equation}

Each \(v_j\in\mathcal S^{k-1}\) determines a longitude, or meridian, in \(\mathcal S^{k}\):
\begin{equation}
\mathcal M_j
=
\left\{
\cos(\alpha)e_k+\sin(\alpha)\Lambda_k(v_j):
\alpha\in[0,\pi]
\right\}.
\end{equation}
Thus, the lower-dimensional vectors \(v_j\) should be interpreted as \textit{longitude} vectors. On each of the longitudes, we construct $L(\rho)$ many vectors as below.

Define the polar levels
\begin{equation}
L(\rho)=\left\lfloor \frac{\pi}{2\rho}\right\rfloor+1
\end{equation}
and
\begin{equation}
\alpha_\ell=\frac{\pi}{4}+(\ell-1)\rho,
\qquad
\ell=1,\dots,L(\rho).
\end{equation}

The proposition below follows since $\alpha_l = \frac{\pi}{4} + (l-1)\rho \le \frac{\pi}{4} + (L(\rho)-1)\rho \le \frac {\pi}{4} + \frac {\pi}{2}$.
\begin{proposition}
\label{prop:alpha_bounds}
\begin{equation}
\frac{\pi}{4}\le \alpha_\ell\le \frac{3\pi}{4},
\end{equation}    
\end{proposition}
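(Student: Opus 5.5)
The claim is $\pi/4 \le \alpha_\ell \le 3\pi/4$ for every $\ell=1,\dots,L(\rho)$. The plan is to bound $\alpha_\ell=\frac{\pi}{4}+(\ell-1)\rho$ from both ends, using the monotonicity of $\alpha_\ell$ in $\ell$ together with the definition $L(\rho)=\lfloor \pi/(2\rho)\rfloor+1$.

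\emph{Lower bound.} Since $\rho>0$ and $\ell\ge 1$, the increment $(\ell-1)\rho$ is nonnegative. Hence $\alpha_\ell\ge \alpha_1=\pi/4$.

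\emph{Upper bound.} Because $\alpha_\ell$ increases with $\ell$, it suffices to control the largest level $\ell=L(\rho)$. Here $(L(\rho)-1)\rho=\lfloor \pi/(2\rho)\rfloor\,\rho$. Using $\lfloor x\rfloor\le x$ with $x=\pi/(2\rho)$ gives $(L(\rho)-1)\rho\le \frac{\pi}{2\rho}\cdot\rho=\frac{\pi}{2}$. Therefore $\alpha_\ell\le \frac{\pi}{4}+\frac{\pi}{2}=\frac{3\pi}{4}$.

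\emph{Main obstacle.} There is none of substance. The only point to check is that the floor in $L(\rho)$ makes the top level land at or below $3\pi/4$, which is exactly what $\lfloor x\rfloor\le x$ provides. No restriction on $\rho$ beyond $\rho>0$ is needed.
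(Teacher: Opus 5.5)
Your proof is correct and is the same argument as the paper's. The paper bounds $\alpha_\ell \le \frac{\pi}{4} + (L(\rho)-1)\rho \le \frac{\pi}{4}+\frac{\pi}{2}$, where the last step is exactly your $\lfloor \pi/(2\rho)\rfloor \le \pi/(2\rho)$; you just spell out the floor step and the lower bound, which the paper leaves implicit.
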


For each longitude \(j=1,\dots,M\) and each polar level \(\ell=1,\dots,L(\rho)\), define
\begin{equation}
\label{eqn:u-ell-j-new}
u_{\ell,j}
=
\cos(\alpha_\ell)e_k+\sin(\alpha_\ell)\Lambda_k(v_j).
\end{equation}

The order is longitude-first. We traverse the first longitude from
\(\alpha_1\) to \(\alpha_{L(\rho)}\), the second longitude from
\(\alpha_{L(\rho)}\) back to \(\alpha_1\), and so on, alternating vector
from one longitude to the next. This gives us an ordering:
\begin{equation}
\begin{aligned}
\mathcal U_k(\rho)
=
\big(&
u_{1,1},u_{2,1},\dots,u_{L(\rho),1},\\
&
u_{L(\rho),2},u_{L(\rho)-1,2},\dots,u_{1,2},\\
&
u_{1,3},u_{2,3},\dots,u_{L(\rho),3},\\
&
u_{L(\rho),4},u_{L(\rho)-1,4},\dots,u_{1,4},\\
&
\dots,\\
&
u_{1,M-1},u_{2,M-1},\dots,u_{L(\rho),M-1},\\
&
u_{L(\rho),M},u_{L(\rho)-1,M},\dots,u_{1,M}
\big).
\end{aligned}
\end{equation}
Equivalently, the construction sweeps through the longitudes
\(j=1,\dots,M\), traversing consecutive longitudes in opposite polar
vectors.

In this regime,
\begin{equation}
\label{eqn:N-small-regime}
N_k(\rho) = L(\rho)M =
L(\rho)N_{k-1}(2\rho).
\end{equation}
It follows that $N_k(\rho)$ is even since$N_{k - 1} (2 \rho)$ is even.

\noindent\textbf{The large-scale regime \(\pi/4<\rho\le \pi/2\).}

In this regime, we take exactly two vectors ($N_k(\rho)=2$),
\begin{equation}
u_1= \cos\left(\frac{\pi}{4}\right)e_k + \sin\left(\frac{\pi}{4}\right)e_1,
\end{equation}
\begin{equation}
u_2= \cos\left(\frac{3\pi}{4}\right)e_k + \sin\left(\frac{3\pi}{4}\right)e_1.
\end{equation}
Then define
\begin{equation}
\mathcal U_k(\rho)=(u_1,u_2).
\end{equation}
The construction in the large-scale regime does not use the lower dimensional vectors. It follows that $N_k(\rho)$ is even.

This completes the recursive construction. From here on, we take that for every \(k\ge 2\) and every \(\rho\in(0,\pi/2]\), we have
\begin{equation}
\mathcal U_k(\rho)=\left(u_1,\dots,u_{N_k(\rho)}\right).
\end{equation}

\subsection{Properties of constructed vectors}

The following propositions are noted, where the first proposition follows since $N_2(\rho)$ is even and the second follows since we are rotating lower-dimensional unit vectors to obtain the higher dimensional unit vectors.
\begin{proposition}
$N_k(\rho) $ is even $\forall \rho \in (0, \pi/2], k \ge 2$. \label{prop:M_even}
\end{proposition}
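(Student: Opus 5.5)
We prove Proposition~\ref{prop:M_even} by induction on the dimension $k$, with the statement quantified over all $\rho\in(0,\pi/2]$ at each level. This quantification matters because the recursive step for $\rho$ calls the $(k-1)$-dimensional construction at the different scale $2\rho$.

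\textbf{Base case $k=2$.} Here $N_2(\rho)=2\lfloor \pi/\rho\rfloor$ is twice an integer, so it is even. Since $\pi/\rho\ge 2$, it is also at least $4$, so the collection is nonempty.

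\textbf{Inductive step.} Fix $k\ge 3$ and assume $N_{k-1}(\eta)$ is even for every $\eta\in(0,\pi/2]$. Given $\rho\in(0,\pi/2]$, we split according to the two regimes of the recursive construction.
\begin{itemize}
\item In the large-scale regime $\pi/4<\rho\le\pi/2$, the collection is $(u_1,u_2)$ by definition, so $N_k(\rho)=2$.
\item In the small-scale regime $0<\rho\le\pi/4$, we have $\eta=2\rho\in(0,\pi/2]$, so the induction hypothesis applies at scale $\eta$. By \eqref{eqn:N-small-regime}, $N_k(\rho)=L(\rho)\,N_{k-1}(2\rho)$. Here $L(\rho)=\lfloor\pi/(2\rho)\rfloor+1$ is a positive integer, and $N_{k-1}(2\rho)$ is even. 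Hence the product is even.
\end{itemize}

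\textbf{Role of parity.} The only point needing care is that the count $N_k(\rho)$ in \eqref{eqn:N-small-regime} really is the length of the ordered list. The list is a concatenation of $M=N_{k-1}(2\rho)$ blocks, one per longitude, each containing exactly $L(\rho)$ vectors. Evenness of $M$ is what will later guarantee that the boustrophedon sweep ends on an even-indexed longitude traversed bottom-to-top, finishing at $u_{1,M}$. This is what is used for the cyclic transition bound; the parity claim itself follows directly from the induction above.
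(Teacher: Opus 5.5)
Your proof is correct and follows the paper's argument: induction on $k$, with the base case $N_2(\rho)=2\lfloor\pi/\rho\rfloor$, the large-scale regime giving $N_k(\rho)=2$, and the small-scale regime giving $N_k(\rho)=L(\rho)N_{k-1}(2\rho)$ with the inductive hypothesis applied at scale $2\rho\in(0,\pi/2]$. You also state explicitly that the hypothesis must hold at every scale, because the recursion calls the $(k-1)$-dimensional construction at $2\rho$; the paper leaves this point implicit.
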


\begin{proposition}
For every \(k\ge 2\), every \(\rho\in(0,\pi/2]\), and every constructed vector \(u_i\in\mathcal U_k(\rho)\), we have $u_i\in\mathcal S^{k}$.
\end{proposition}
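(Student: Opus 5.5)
We prove the statement by induction on $k$, the same induction that drives the construction. In each case we check the defining expression for $u_i$ directly and confirm that it has unit norm.

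\emph{Base case $k=2$.} Each $u_j=\cos((j-1)\gamma(\rho))e_1+\sin((j-1)\gamma(\rho))e_2$ is a combination of the orthonormal vectors $e_1,e_2$. Hence
\[
\|u_j\|^2=\cos^2((j-1)\gamma(\rho))+\sin^2((j-1)\gamma(\rho))=1,
\]
so $u_j\in\mathcal S^2$.

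\emph{Inductive step, $k\ge 3$.} Assume every vector of $\mathcal U_{k-1}(\eta)$ lies in $\mathcal S^{k-1}$ for every $\eta\in(0,\pi/2]$. We split into the two regimes of the construction.

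In the small-scale regime $0<\rho\le\pi/4$, the parameter $\eta=2\rho$ lies in $(0,\pi/2]$. The induction hypothesis therefore gives $v_j\in\mathcal S^{k-1}$. By the two propositions on $\Lambda_k$, $\Lambda_k(v_j)$ is a unit vector orthogonal to $e_k$. Expanding $\|u_{\ell,j}\|^2$ from its definition, the cross term $2\cos(\alpha_\ell)\sin(\alpha_\ell)\Lambda_k(v_j)^T e_k$ vanishes. What remains is
\[
\|u_{\ell,j}\|^2=\cos^2\alpha_\ell+\sin^2\alpha_\ell=1 .
\]
The reordering in $\mathcal U_k(\rho)$ only permutes these vectors, so every $u_i$ lies in $\mathcal S^k$.

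In the large-scale regime $\pi/4<\rho\le\pi/2$, both $u_1$ and $u_2$ are combinations of the orthonormal pair $e_k,e_1$ with coefficients $(\cos\beta,\sin\beta)$, where $\beta=\pi/4$ or $3\pi/4$. Their norms are therefore $1$ by the same identity.

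There is no real obstacle here. The only point needing care is that the induction hypothesis is applied at the doubled scale $\eta=2\rho$, and this is legitimate precisely because the small-scale regime requires $\rho\le\pi/4$.
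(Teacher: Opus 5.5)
Your proof is correct and takes essentially the same route as the paper, which justifies the claim in one line: each vector is obtained by "rotating" a lower-dimensional unit vector. Concretely, each vector is $\cos\alpha\, e_k+\sin\alpha\,\Lambda_k(v_j)$ with $\Lambda_k(v_j)\perp e_k$, and induction on $k$ guarantees that the $v_j$ are unit vectors. You simply spell out the norm computation, the large-scale regime, and the check that $\eta=2\rho\le\pi/2$, all of which the paper leaves implicit.
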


\subsection{Separation of the constructed vectors}

\begin{lemma}
Every pair of distinct constructed vectors is separated by angular distance at least \(\rho\). That is, $\forall u_i, u_j \in \mathcal U_k(\rho)$,
\begin{equation}
\theta_{\mathcal S}(u_i,u_j)\ge \rho
\qquad
\text{for all }i\neq j.
\end{equation}
\end{lemma}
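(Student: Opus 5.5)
We argue by induction on $k$, treating the base case $k=2$, the large-scale regime, and the small-scale regime separately. The inductive hypothesis is the separation property of $\mathcal U_{k-1}(\eta)$ with $\eta=2\rho$.

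For $k=2$, the vectors are equally spaced on the circle with spacing $\gamma(\rho)$. The angular distance between $u_i$ and $u_j$ is $\min(|i-j|\gamma, 2\pi-|i-j|\gamma)$. This is at least $\gamma$, since $1\le |i-j|\le N_2-1$. By \eqref{eqn:gamma-bound-new} we have $\gamma\ge\rho$.

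In the large-scale regime we have $u_1^Tu_2=\cos^2(\pi/4)\cdot(-1)+\sin^2(\pi/4)=0$. Hence $\theta_{\mathcal S}(u_1,u_2)=\pi/2\ge\rho$.

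In the small-scale regime, fix distinct pairs $(\ell,j)\ne(\ell',j')$. From \eqref{eqn:u-ell-j-new}, using $\Lambda_k(v)^Te_k=0$ and that $\Lambda_k$ preserves inner products, we get
\begin{equation*}
u_{\ell,j}^Tu_{\ell',j'}=\cos\alpha_\ell\cos\alpha_{\ell'}+\sin\alpha_\ell\sin\alpha_{\ell'}\,v_j^Tv_{j'}.
\end{equation*}

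If $j=j'$, the inner product equals $\cos(\alpha_\ell-\alpha_{\ell'})$. The angle is therefore $|\ell-\ell'|\rho\ge\rho$, because $|\alpha_\ell-\alpha_{\ell'}|\le\pi/2\le\pi$.

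If $j\neq j'$, the inductive hypothesis gives $v_j^Tv_{j'}\le\cos(2\rho)$. Since $\sin\alpha_\ell\sin\alpha_{\ell'}>0$ by Proposition~\ref{prop:alpha_bounds}, we obtain
\begin{equation*}
u_{\ell,j}^Tu_{\ell',j'}\le \cos\alpha_\ell\cos\alpha_{\ell'}+\sin\alpha_\ell\sin\alpha_{\ell'}\cos(2\rho).
\end{equation*}
It then suffices to show that the right-hand side is at most $\cos\rho$.

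This case is the main obstacle, because two vectors on different longitudes near the polar caps could in principle be close. The restriction $\alpha\in[\pi/4,3\pi/4]$ is what rules this out. We rewrite the bound as
\begin{equation*}
\cos(\alpha_\ell-\alpha_{\ell'})-\sin\alpha_\ell\sin\alpha_{\ell'}\,(1-\cos 2\rho).
\end{equation*}
We then split on the size of $|\alpha_\ell-\alpha_{\ell'}|$.

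If $|\alpha_\ell-\alpha_{\ell'}|\ge\rho$, the first term is already at most $\cos\rho$ and the subtracted term is nonnegative, so we are done.

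Otherwise $\ell=\ell'$, since distinct levels differ by at least $\rho$. The expression then becomes $1-\sin^2\alpha_\ell(1-\cos2\rho)$. Using $\sin^2\alpha_\ell\ge 1/2$, this is at most
\begin{equation*}
1-\tfrac12(1-\cos2\rho)=1-\sin^2\rho=\cos^2\rho\le\cos\rho.
\end{equation*}
Together with monotonicity of $\cos^{-1}$, this gives $\theta_{\mathcal S}\ge\rho$ in every case.
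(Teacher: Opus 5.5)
Your proof is correct and follows the paper's argument: the same base case, the same large-scale computation, and in the small-scale regime the same three-way split into same longitude, different longitudes at different levels, and different longitudes at the same level, where $\sin^2\alpha_\ell\ge 1/2$ gives $\cos^2\rho\le\cos\rho$. The only difference is cosmetic: for different longitudes at different levels you bound $v_j^Tv_{j'}$ by $\cos 2\rho$ and drop a nonnegative term, whereas the paper uses $v_j^Tv_{j'}\le 1$ directly.
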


\begin{proof}
For \(k=2\), the vectors are equally spaced around the circle with angular gap \(\gamma(\rho)\) between adjacent vectors. By \eqref{eqn:gamma-bound-new}, $\gamma(\rho)\ge \rho$. Therefore every pair of distinct vectors has angular distance at least \(\rho\).

Now assume the claim holds in dimension \(k-1\), and let \(k\ge 3\).

\noindent \textbf{Large-scale regime \(\pi/4 <\rho \le \pi/2\).} The two constructed vectors lie on the same meridian with polar angles \(\pi/4\) and \(3\pi/4\). Hence
\begin{equation}
\theta_{\mathcal S}(u_1,u_2)=\frac{\pi}{2}\ge \rho.
\end{equation}

\noindent \textbf{Small-scale regime \(0<\rho\le\pi/4\).} Let
\begin{equation}
u_{\ell,j} = \cos(\alpha_\ell)e_k+\sin(\alpha_\ell)\Lambda_k(v_j)
\end{equation}
and
\begin{equation}
u_{\ell',j'}
=
\cos(\alpha_{\ell'})e_k+\sin(\alpha_{\ell'})\Lambda_k(v_{j'})
\end{equation}
be two distinct constructed vectors.

Their inner product is
\begin{equation}
\label{eqn:inner-product-grid}
u_{\ell,j}^T u_{\ell',j'}
=
\cos(\alpha_\ell)\cos(\alpha_{\ell'})
+
\sin(\alpha_\ell)\sin(\alpha_{\ell'})
v_j^T v_{j'}.
\end{equation}

We next consider three exhaustive cases.

\noindent\textbf{Case 1: same longitude.}
Suppose \(j=j'\) and \(\ell\neq \ell'\). Since \(v_j^T v_j=1\), \eqref{eqn:inner-product-grid} gives
\begin{equation}
\theta_{\mathcal S}(u_{\ell,j},u_{\ell',j})
=
|\alpha_\ell-\alpha_{\ell'}|.
\end{equation}
Because $\alpha_\ell=\frac{\pi}{4}+(\ell-1)\rho$, we have $|\alpha_\ell-\alpha_{\ell'}| = |\ell-\ell'|\rho \ge \rho$. Hence, $
\theta_{\mathcal S}(u_{\ell,j},u_{\ell',j})\ge \rho$.

\noindent\textbf{Case 2: different longitudes and different polar levels.}
Suppose \(j\neq j'\) and \(\ell\neq \ell'\). Since \(v_j^T v_{j'}\le 1\), \eqref{eqn:inner-product-grid} gives
\begin{align}
u_{\ell,j}^T u_{\ell',j'}
&\le
\cos(\alpha_\ell)\cos(\alpha_{\ell'})
+
\sin(\alpha_\ell)\sin(\alpha_{\ell'})\\
&=
\cos(\alpha_\ell-\alpha_{\ell'}).
\end{align}
Therefore, $\theta_{\mathcal S}(u_{\ell,j},u_{\ell',j'}) \ge |\alpha_\ell-\alpha_{\ell'}| = |\ell-\ell'|\rho \ge \rho$.

\noindent\textbf{Case 3: different longitudes and the same polar level.}
Suppose \(j\neq j'\) and \(\ell=\ell'\). By the induction hypothesis of Theorem~\ref{thm:unit_vector_theorem} applied to
\begin{equation}
\mathcal U_{k-1}(2\rho)=\left(v_1,\dots,v_M\right),
\end{equation}
we have $\theta_{\mathcal S}(v_j,v_{j'})\ge 2\rho$ and $v_j^T v_{j'}\le \cos(2\rho)$. Hence
\begin{equation}
u_{\ell,j}^T u_{\ell,j'} = \cos^2(\alpha_\ell) + \sin^2(\alpha_\ell)v_j^T v_{j'} = 1-\sin^2(\alpha_\ell)(1-\cos(2\rho)).
\end{equation}
Since $\frac \pi4 \le \alpha_\ell \le \frac {3 \pi}4$ from Proposition~\ref{prop:alpha_bounds}, $\sin \alpha_\ell \ge \frac 1{\sqrt{2}}$. Thus, we get
\begin{equation}
u_{\ell,j}^T u_{\ell,j'} \le
1-\frac{1}{2}(1-\cos(2\rho)) =
\cos^2\rho.
\end{equation}
Since \(0<\rho\le\pi/4\), we have $\cos^2\rho\le \cos\rho$. Thus, $u_{\ell,j}^T u_{\ell,j'}\le \cos\rho$, which is equivalent to the desired statement.
\end{proof}

\subsection{Local transitions between consecutive vectors}

\begin{lemma}
\label{lem:local-transitions}
For every \(k\ge 2\), there exists a constant \(A_k>0\), depending only on \(k\), such that the ordered construction
$\mathcal U_k(\rho)$ satisfies
\begin{equation}
\theta_{\mathcal S}(u_i,u_{i+1})\le A_k\rho
\qquad \forall i=1,\dots,N_k(\rho)-1.
\end{equation}
Moreover, the cyclic local transition also holds:
\begin{equation}
\theta_{\mathcal S}(u_{N_k(\rho)},u_1)\le A_k\rho.
\end{equation}
\end{lemma}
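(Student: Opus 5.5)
We argue by induction on $k$ and prove the ordinary and the cyclic transition bounds together. The cyclic bound is what lets the induction close.

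\textbf{Base case $k=2$.} Consecutive vectors, including the wrap-around pair $(u_{N_2(\rho)},u_1)$, are separated by exactly $\gamma(\rho)$. By \eqref{eqn:gamma-bound-new}, $\gamma(\rho)\le 2\rho$, so $A_2=2$ works.

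\textbf{Inductive step, large-scale regime $\pi/4<\rho\le\pi/2$.} There are only two vectors, and $\theta_{\mathcal S}(u_1,u_2)=\pi/2\le 2\rho$. This covers both the forward and the cyclic transition, so any $A_k\ge 2$ suffices.

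\textbf{Inductive step, small-scale regime $0<\rho\le\pi/4$.} Let $A_{k-1}$ be the constant for dimension $k-1$. We use the identity \eqref{eqn:inner-product-grid} and split consecutive pairs into three types.

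\emph{(i) Same longitude, adjacent polar levels.} By Case 1 of the separation lemma, the angle is $|\alpha_\ell-\alpha_{\ell\pm1}|=\rho$.

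\emph{(ii) Jump from longitude $j$ to $j+1$ at a fixed level.} By the boustrophedon ordering, the pair is $u_{\ell,j}$, $u_{\ell,j+1}$ with $\ell\in\{1,L(\rho)\}$. From \eqref{eqn:inner-product-grid},
\[
1-u_{\ell,j}^Tu_{\ell,j+1}=\sin^2(\alpha_\ell)\bigl(1-v_j^Tv_{j+1}\bigr)\le 1-v_j^Tv_{j+1}.
\]
Since $1-\cos$ is increasing on $[0,\pi]$, this gives $\theta_{\mathcal S}(u_{\ell,j},u_{\ell,j+1})\le\theta_{\mathcal S}(v_j,v_{j+1})$. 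The induction hypothesis at scale $\eta=2\rho$ then bounds this by $A_{k-1}\cdot 2\rho$.

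\emph{(iii) Cyclic closure.} The last vector is $u_{1,M}$. This is because $M=N_{k-1}(2\rho)$ is even by Proposition~\ref{prop:M_even}, so longitude $M$ is traversed bottom-to-top and ends at level $1$. The first vector is $u_{1,1}$. Both lie at level $\ell=1$, so the same computation as in (ii) gives
\[
\theta_{\mathcal S}(u_{1,M},u_{1,1})\le\theta_{\mathcal S}(v_M,v_1)\le 2A_{k-1}\rho,
\]
using the cyclic part of the induction hypothesis. Hence $A_k=\max(2A_{k-1},2)$ works in both regimes, and it depends only on $k$.

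The main obstacle is step (iii): we must check that the serpentine order really ends at the same polar level where it started. This is exactly where the parity maintained in Proposition~\ref{prop:M_even} is used, since an odd $M$ would end at level $L(\rho)$ and give a wrap-around jump of order $\pi/2$. We also need the cyclic bound in dimension $k-1$ to control the angle between $v_M$ and $v_1$. A smaller point to handle is that the monotonicity step needs angles in $[0,\pi]$, which holds by the definition of $\theta_{\mathcal S}$.
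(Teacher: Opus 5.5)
Your proof is correct and follows the paper's argument essentially step for step. It uses the same induction with $A_2=2$ and $A_k=2A_{k-1}$ (your $\max(2A_{k-1},2)$ coincides with this) and the same three-way split in the small-scale regime, where your identity $1-u_{\ell,j}^Tu_{\ell,j+1}=\sin^2(\alpha_\ell)(1-v_j^Tv_{j+1})$ is just a rearrangement of the paper's bound $u_{\ell_j,j}^Tu_{\ell_j,j+1}\ge\cos\beta_j$. It also closes the wrap-around exactly as the paper does, via the parity of $M=N_{k-1}(2\rho)$ and the cyclic hypothesis at scale $2\rho$.
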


\begin{proof}
We prove the claim by induction on \(k\).

For \(k=2\), consecutive vectors on the circle are separated by the angle $\gamma(\rho)=\frac{2\pi}{N_2(\rho)}$. The cyclic transition from \(u_{N_2(\rho)}\) back to \(u_1\) is also exactly \(\gamma(\rho)\). By \eqref{eqn:gamma-bound-new}, the claim holds for \(k=2\) with $A_2=2$.

Now assume the claim holds in dimension \(k-1\), and let \(k\ge 3\). We show that the claim holds with
\begin{equation}
A_k=2A_{k-1} = 2^{k-1}.
\end{equation}

\noindent \textbf{Large-scale regime.} First consider the large-scale regime \(\pi/4<\rho\le\pi/2\). The construction has exactly two vectors, and $\theta_{\mathcal S}(u_1,u_2)=\frac{\pi}{2}.$ Since \(\rho>\pi/4\), we have $\frac{\pi}{2}<2\rho$ and since \(A_k=2A_{k-1}\ge 2\), we have $2\rho \le A_k\rho$. From this, we have $\frac{\pi}{2} \le A_k \rho$ and thus the claim holds in the large-scale regime.

\noindent \textbf{Small-scale regime.} Recall from~\eqref{eqn:lower_dimensional_vectors_defn} that $\eta=2\rho$
and that
\begin{equation}
\mathcal U_{k-1}(\eta)=\left(v_1,\dots,v_M\right).
\end{equation}
By the induction hypothesis,
$\theta_{\mathcal S}(v_j,v_{j+1})\le A_{k-1}\eta \forall j=1,\dots,M-1$, and $
\theta_{\mathcal S}(v_M,v_1)\le A_{k-1}\eta$.

There are three kinds of local transitions in the longitude-first ordering.

\noindent\textbf{Case 1: Consecutive vectors on the same longitude.}
These vectors have the form
\begin{equation}
u_{\ell,j}
=
\cos(\alpha_\ell)e_k+\sin(\alpha_\ell)\Lambda_k(v_j)
\end{equation}
and
\begin{equation}
u_{\ell+1,j}
=
\cos(\alpha_{\ell+1})e_k+\sin(\alpha_{\ell+1})\Lambda_k(v_j),
\end{equation}
or the same pair in reverse order. Since the longitude vector \(v_j\) is the same,
\begin{equation}
\theta_{\mathcal S}(u_{\ell,j},u_{\ell+1,j})
=
\alpha_{\ell+1}-\alpha_\ell
=
\rho.
\end{equation}
Thus such angles are bounded by \(A_k\rho\).

\noindent\textbf{Case 2: Transition from one longitude to the next.}
Because the polar ordering alternates vector from one longitude to the next, the transition from longitude \(j\) to longitude \(j+1\) always occurs at a fixed polar level. More precisely, it is the pair $u_{L(\rho),j},\ u_{L(\rho),j+1} $ when \(j\) is odd, and the pair $u_{1,j},\ u_{1,j+1}$ when \(j\) is even.

Let
\begin{equation}
\beta_j=\theta_{\mathcal S}(v_j,v_{j+1}).
\end{equation}
By the induction hypothesis,
\begin{equation}
\beta_j\le A_{k-1}\eta.
\end{equation}

For either of the two transitions, write the common polar index as
\begin{equation}
\ell_j=
\begin{cases}
L(\rho), & j \text{ odd},\\
1, & j \text{ even}.
\end{cases}
\end{equation}
Then the two consecutive vectors are
\begin{equation}
u_{\ell_j,j}
=
\cos(\alpha_{\ell_j})e_k
+
\sin(\alpha_{\ell_j})\Lambda_k(v_j)
\end{equation}
and
\begin{equation}
u_{\ell_j,j+1}
=
\cos(\alpha_{\ell_j})e_k
+
\sin(\alpha_{\ell_j})\Lambda_k(v_{j+1}).
\end{equation}
Their inner product is
\begin{align}
u_{\ell_j,j}^{T}u_{\ell_j,j+1}
&= \cos^2(\alpha_{\ell_j}) + \sin^2(\alpha_{\ell_j})v_j^Tv_{j+1} \\
&= \cos^2(\alpha_{\ell_j}) + \sin^2(\alpha_{\ell_j})\cos\beta_j \\
&= \cos\beta_j + \cos^2(\alpha_{\ell_j})(1-\cos\beta_j) \\
&\ge \cos\beta_j.
\end{align}
Since \(\arccos\) is decreasing on \([-1,1]\), it follows that
\begin{equation}
\theta_{\mathcal S} \left(u_{\ell_j,j}, u_{\ell_j,j+1} \right) \le \beta_j.
\end{equation}
Therefore every transition from longitude \(j\) to longitude \(j+1\) satisfies
\begin{equation}
\theta_{\mathcal S} \left( u_{\ell_j,j}, u_{\ell_j,j+1}\right) \le A_{k-1}\eta = 2A_{k-1}\rho = A_k\rho.
\end{equation}

\noindent\textbf{Case 3: Cyclic transition from the last vector back to the first.}
Recall from Proposition~\ref{prop:M_even} that \(M\) is even. The last longitude \(j=M\) is traversed from \(\alpha_{L(\rho)}\) down to \(\alpha_1\). Hence the last vector of the whole ordered collection is
\begin{equation}
u_{1,M} = \cos(\alpha_1)e_k+\sin(\alpha_1)\Lambda_k(v_M),
\end{equation}
while the first vector is
\begin{equation}
u_{1,1} = \cos(\alpha_1)e_k+\sin(\alpha_1)\Lambda_k(v_1).
\end{equation}
The cyclic lower-dimensional transition gives
\begin{equation}
\theta_{\mathcal S}(v_M,v_1)\le A_{k-1}\eta.
\end{equation}
Repeating the fixed-polar-level estimate from Case 2 gives
\begin{equation}
\theta_{\mathcal S}(u_{1,M},u_{1,1})
\le A_{k-1}\eta = 2A_{k-1}\rho = A_k\rho.
\end{equation}

Combining the three cases proves both the consecutive local transition bound and the cyclic local transition bound.
\end{proof}

\subsection{Lower bound on the number of constructed vectors}

\begin{lemma}
There exists a constant \(c_k>0\), depending only on \(k\), such that
\begin{equation}
N_k(\rho)\ge c_k\rho^{-(k-1)}
\end{equation}
for every \(\rho\in(0,\pi/2]\).
\end{lemma}

\begin{proof}
We prove the claim by induction on \(k\), with the explicit choice
\begin{equation}
c_k =\frac{\pi^{k-1}}{2^{k^2}}.
\end{equation}

For \(k=2\), we have
$N_2(\rho)=2\left\lfloor \frac{\pi}{\rho}\right\rfloor.$
Since \(\rho\le\pi/2\), we have \(\pi/\rho\ge 2\), and therefore $
\left\lfloor \frac{\pi}{\rho}\right\rfloor \ge \frac{1}{2}\frac{\pi}{\rho}$. Thus,
$N_2(\rho)\ge \frac{\pi}{\rho}$. Since $c_2=\frac{\pi}{2^4}\le \pi$, the claim holds for \(k=2\).

Now assume the claim holds in dimension \(k-1\), and let \(k\ge 3\).

\noindent \textbf{Small-scale regime.} First suppose \(0<\rho\le\pi/4\). By \eqref{eqn:N-small-regime}, $N_k(\rho) = L(\rho)N_{k-1}(2\rho)$. Since
\begin{equation}
L(\rho) = \left\lfloor \frac{\pi}{2\rho}\right\rfloor+1
\ge
\frac{\pi}{2\rho},
\end{equation}
and since the induction hypothesis gives
\begin{equation}
N_{k-1}(2\rho)
\ge
c_{k-1}(2\rho)^{-(k-2)},
\end{equation}
we obtain
\begin{equation}
N_k(\rho) \ge \frac{\pi}{2\rho} \cdot c_{k-1}(2\rho)^{-(k-2)} = \frac{\pi c_{k-1}}{2^{k-1}} \rho^{-(k-1)} \ge c_k\rho^{-(k-1)}.
\end{equation}

\noindent \textbf{Large-scale regime.} Next suppose \(\pi/4<\rho\le\pi/2\). In this regime, $N_k(\rho)=2$. Also, $\rho^{-(k-1)} \le \left(\frac{4}{\pi}\right)^{k-1}$. Therefore
\begin{equation}
c_k\rho^{-(k-1)} \le \frac{\pi^{k-1}}{2^{k^2}} \left(\frac{4}{\pi}\right)^{k-1} = 2^{2k-2-k^2} \le 2.
\end{equation}
Hence
\begin{equation}
N_k(\rho)=2\ge c_k\rho^{-(k-1)}
\end{equation}

Combining the two regimes, the claim holds.
\end{proof}

Taking \(k=d\), \(A_d=2^{d-1}\), and
\(c_d=\frac{\pi^{d-1}}{2^{d^2}},\)
the constructed ordered collection \(\mathcal U_d(\rho)\) satisfies the desired conditions of the theorem.

\bibliographystyle{plainnat}
\bibliography{main}

\end{document}